\newcommand{\jv}{\textbf{j}}
\newcommand{\pv}{\textbf{p}}
\newcommand{\qv}{\textbf{q}}
\newcommand{\uv}{\textbf{u}}
\newcommand{\vv}{\textbf{v}}
\newcommand{\wv}{\textbf{w}}
\newcommand{\xv}{\textbf{x}}
\newcommand{\zv}{\textbf{z}}
\newcommand{\Sigmam}{\mbox{\boldmath$\Sigma$}}
\newcommand{\Pim}{\mbox{\boldmath$\Pi$}}
\newcommand{\diag}{\mbox{diag}}
\newcommand{\tr}{\mbox{tr}}
\newcommand{\st}{\mbox{s.t.}}
\newcommand{\Am}{\textbf{A}}
\newcommand{\Bm}{\textbf{B}}
\newcommand{\Cm}{\textbf{C}}
\newcommand{\Dm}{\textbf{D}}
\newcommand{\Em}{\textbf{E}}
\newcommand{\Jm}{\textbf{J}}
\newcommand{\Um}{\textbf{U}}
\newcommand{\Vm}{\textbf{V}}
\newcommand{\Xm}{\textbf{X}}
\newcommand{\Zm}{\textbf{Z}}
\newtheorem{definition}{Definition}[section]
\newtheorem{theorem}{Theorem}[section]
\newtheorem{corollary}{Corollary}[section]
\newtheorem{lemma}{Lemma}[section]
\title{AUC Optimisation and Collaborative Filtering}
\author{Charanpal Dhanjal \and St\'ephan Cl\'{e}men\c{c}on}
\address[Charanpal Dhanjal]{Institut Mines-T\'{e}l\'{e}com; T\'{e}l\'{e}com ParisTech, CNRS LTCI, 46 rue Barrault, 75634 Paris Cedex 13, France}
\email[Charanpal Dhanjal]{\{charanpal.dhanjal, stephan.clemencon\}@telecom-paristech.fr}
\author{Romaric Gaudel}
\address[Romaric Gaudel]{Université Lille 3, Domaine Universitaire du Pont de Bois, 59653 Villeneuve d'Ascq Cedex, France}
\email[Romaric Gaudel]{romaric.gaudel@univ-lille3.fr}
\date{\today}
\begin{document} 

\begin{abstract} 
In recommendation systems, one is interested in the ranking of the predicted items as opposed to other losses such as the mean squared error. Although a variety of ways to evaluate rankings exist in the literature, here we focus on the Area Under the ROC Curve (AUC) as it widely used and has a strong theoretical underpinning. In practical recommendation, only items at the top of the ranked list are presented to the users. With this in mind, we propose a class of objective functions over matrix factorisations which primarily represent a smooth surrogate for the real AUC,  and in a special case we show how to prioritise the top of the list. The objectives are differentiable and optimised through a carefully designed stochastic gradient-descent-based algorithm which scales linearly with the size of the data. In the special case of square loss we show how to improve computational complexity by leveraging previously computed measures.  To understand theoretically the underlying matrix factorisation approaches we study both the consistency of the loss functions with respect to AUC, and generalisation using Rademacher theory. The resulting generalisation analysis gives strong motivation for the optimisation under study. Finally, we provide computation results as to the efficacy of the proposed method using synthetic and real data. 
\end{abstract}

\maketitle

\section{Introduction}

A recommendation system \cite{adomavicius2005toward,koren2009matrix} takes a set of items that a user has rated and recommends new items that the user may like in the future. Such systems have a broad range of applications such as recommending books \cite{linden2003amazon}, CDs \cite{linden2003amazon}, movies \cite{szomszor2007folksonomies,basu1998recommendation} and news \cite{das2007google}. To formalise the recommendation problem let $\{w_1, \ldots, w_m\} \subseteq \mathcal{W}$ be the set of all users and let $\{y_1, \ldots, y_n\} \subseteq \mathcal{Y}$ be the items that can be recommended. Each user $w_i$ rates item $y_j$ with a value $r_{ij}$ which measures whether item $y_j$ is useful to $w_i$. In this work, we consider the \emph{implicit recommendation problem} in which $r_{ij} \in \mathcal{R} = \{-1, +1\}$. For example, the rating value represents whether a person has read a particular research paper, or bought a product. We are thus presented with a set of observations $\{(w_i, y_j, r_{ij}) : w_i \in \mathcal{W},  y_j \in \mathcal{Y}\}$ as a training sample.  The aim of recommendation systems is to find a scoring function $f:  \mathcal{W} \times \mathcal{Y} \mapsto \mathbb{R}$ such that the score $f(w, y)$ is high when user $w$ strongly prefers item $y$. This problem is challenging for a number of reasons: we are interested only in the top few items for each user, there is often a large fraction of missing observations (irrelevant items are generally unknown) and the sample of rated items is often drawn from a non-uniform distribution.  

To address these issues, we propose a general framework for obtaining strong ranking of items based on the theoretically well studied local \emph{Area Under the ROC Curve} (AUC) metric. The framework relies on matrix factorisation to represent parameters, which generally performs well and scales to large numbers of users and items. One essentially finds a low rank approximation of the unobserved entries according to an AUC-based objective and several different surrogate loss functions. In addition we show how to focus the optimisation to the items placed at the top of the list. The resulting methods have smooth differentiable objective functions and can be solved using stochastic gradient descent. We additionally show how to perform the optimisation in parallel so it can make effective use of modern multicore CPUs. The novel algorithms are studied empirically in relation to other state-of-the-art matrix factorisation methods on a selection of synthetic and real datasets. We also answer some theoretical questions about the proposed methods. The first question is whether optimising the surrogate functions will result in improving the AUC. The second question represents a generalisation analysis of the matrix factorisation approaches using \emph{Rademacher Theory} \cite{bartlett03rademacher}, a data-dependent approach to obtaining error bounds. The analysis sheds some light onto whether the quantities optimised on a training sample will generalise to unseen observations and provides a bound between these values. Note that a preliminary version of this paper has been presented in \cite{dhanjal2015collaborative} and here we extend the work by considering a much larger class of objectives, the theoretical study and further empirical analysis. 

This paper is organised as follows. In the next section we motivate and derive the Matrix Factorisation with AUC (MFAUC) framework and present its specialisation according to a variety of objective functions. In Section \ref{sec:theory} we present the theoretical study on consistency and generalisation of the proposed approaches. Following, there is a review on related work on matrix factorisation methods including those based on top-$\ell$ rank-based losses. The MFAUC algorithm is then evaluated empirically in Section \ref{sec:exp} and finally we conclude with a summary and some perspectives.  

\textbf{Notation}: A bold uppercase letter represents a matrix, e.g. $\Xm$, and a column vector is displayed using a bold lowercase letter, e.g. $\xv$. The transpose of a matrix or vector is written $\Xm^T$. The indicator function is given by $I(\cdot)$ so that it is 1 when its input is true otherwise it is . The all ones vector is denoted by $\jv$ and the corresponding matrix is $\Jm$. 

\section{Maximising AUC}\label{sec:MAUC}

The Area Under the ROC Curve is a common way to evaluate rankings. Consider user $w \in \mathcal{W}$ then the AUC is the expectation that a uniformly drawn positive item $y$ is greater with respect to a negative item $y'$ under a scoring function $s: \mathcal{Y} \mapsto \mathbb{R}$
\begin{equation}\label{eqn:auc1}
\mbox{AUC}_\mathcal{D}(s) = \mathbb{E}_{\mathcal{D}}[\mathcal{I}(s(y) > s(y'))], 
\end{equation}
where $\mathcal{D}$ is a distribution over items for $w$, $\mathcal{I}$ is the \emph{indicator function} that is 1 when its input is true and otherwise 0, and we assume scores are never equal. One cannot in general maximise AUC directly since the indicator function is non-smooth and non-differentiable and hence difficult to optimise. Our observations consist only of positive relevance for items, thus we maximise a quantity closely related to the AUC for all users, which is practical to optimise. The missing observations are considered as negative as in \cite{cremonesi2010performance,steck2010training} for example. 

Accuracy experienced by users is closely related to performance on complete data rather than available data \cite{steck2010training} and thus the distribution $\mathcal{D}$ is an important consideration in a practical recommendation system. This implies that a non-uniform sampling of items might be beneficial. Consider a user $w$ and a rating function $s$, then the empirical AUC for this user is: 

\begin{equation} \label{eqn:auc}
\widehat{\mbox{AUC}}(s) = \sum_{p \in \omega}\sum_{q \in \bar{\omega}} \mathcal{I}(s(y_p) > s(y_q))g(y_p) g'(y_q),
\end{equation}
where $\omega$ is the set of indices of relevant items for user $w$, $\bar{\omega}$ is the complement of $\omega$, and $g$ and $g'$ are distributions on the relevant and irrelevant items respectively. The most intuitive choices for $g$ and $g'$ are the uniform distributions. On real datasets however, item ratings often have a long tail distribution so that a small number of items represent large proportions of the total number of ratings. This opens up the question as to whether the so-called \emph{popularity bias} might be an advantage to recommender systems that predict using the same distribution and we return to this point later. 

The AUC certainly captures a great deal of information about the ranking of items for each user, however as pointed out by previous authors, in practice one is only interested in the top items in that ranking. Two scoring functions $s$ and $s'$ could have identical AUC value and yet $s$ for example scores badly on the top few items but recovers lower down the list. One way of measuring this behaviour is through \emph{local AUC} \cite{clemenccon2007ranking} which is the expectation that a positive item is scored higher than a negative one, and the positive one is in the top $t$th quantile. This is equivalent to saying that we ignore positive items low down in the ranking. 

\subsection{A General Framework}\label{sec:general}

As previously stated, direct maximisation of AUC is not practical and hence we use a selection of surrogate functions to approximate the indicator function. Here a scoring function is found for all users and items, in particular the score for the $i$th user and $j$th item is given by $s(w_i, y_j) = (\Um\Vm^T)_{ij}$ where $\Um \in \mathbb{R}^{m \times k}$ is a matrix of $m$ \emph{user factors} and $\Vm \in \mathbb{R}^{n \times k}$ is a matrix of $n$ \emph{item factors}. We will refer to the $i$th rows of these matrices as $\uv_i$ and $\vv_i$ respectively. Furthermore, let $\Xm \in \mathbb{R}^{m \times n}$ be a matrix of ratings such that $\Xm_{ij} = +1$ if user $w_i$ finds item $y_j$ relevant otherwise $\Xm_{ij} = 0$ if the item is missing or irrelevant. In a sense which is made clear later, $\Xm$ is an approximation of the complete structure of learner scores so that $\Xm \approx \Um\Vm^T$.  The advantage of this matrix factorisation strategy is that the scoring function has $k(m+n)$ parameters and hence scales linearly with both the number of users and items. 

The framework we propose is composed principally of solving the following general optimisation: 
\begin{equation}\label{eqn:obj}
\begin{split}
\min \quad&  \frac{1}{m}\sum_{i=1}^m \sum_{p \in \omega_i} g(y_{p}) \phi \left(  \sum_{q \in \bar{\omega}_i} L(\gamma_{i,p,q}) g'(y_{q})\right) + \frac{\lambda}{2}\left(\frac{1}{m}\|\Um\|^2_F + \frac{1}{n}\|\Vm\|^2_F\right), 
\end{split}
\end{equation}
for a user-defined regularisation parameter $\lambda$, loss function $L$, rank weighting function $\phi$, item difference $\gamma_{i,p,q} = \uv_i^T\vv_p - \uv_i^T\vv_q$ and distributions for relevant and irrelevant items $g$ and $g'$. The relevant items for the $i$th user are given by the set $\omega_i$ and irrelevant/missing items are indexed in $\bar{\omega}_i$. The first sum in the first term averages over users and the second sum computes ranking losses over positive items for the $i$th user.  The second term is a penalisation on the Frobenius norms ($\|\Am\|^2_F = \sum_{ij} \Am_{ij}^2$) of the user and item factor matrices. Concrete item distributions $g$ and $g'$ are discussed later in this section but a simple case is using the uniform distributions $1/|\omega_i|$ and $1/|\bar{\omega}_i|$ respectively. 

The loss function $L$ can be specialised to one of the following options ($\beta > 0$ is a user-defined parameter): 
\begin{displaymath}
\begin{array}{l l}
L(x) = \frac{1}{2}\max(0, 1-x)^2 & \mbox{square hinge}  \\
L(x) = \frac{1}{2}(1-x)^2 & \mbox{square} \\
L(x) = -1/(1+e^{-\beta x}) & \mbox{sigmoid} \\ 
L(x) = -\ln(1/(1+e^{-\beta x})) & \mbox{logistic}
\end{array}
\end{displaymath}
See Figure \ref{fig:lossPlots} for a graphical representation of each of these loss functions. On binary classification the square hinge loss is show to provide a strong AUC on both training and test data in \cite{yan2003optimizing}. Furthermore, the objective becomes convex in $\Um$ and $\Vm$ when $\phi(x) = x$. The squared loss is shown to be consistent with the AUC \cite{gao2013one} and the squared hinge loss is shown to be consistent in \cite{gao2012consistency}. In contrast, the hinge loss is not consistent with AUC. The sigmoid function is perhaps the best approximation of the negative indicator function. As $\beta \rightarrow \infty$ it approaches the indicator function, and hence we get an objective exactly corresponding to maximising AUC. The sigmoid function is used in conjunction with AUC for bipartite ranking in \cite{herschtal2004optimising}. An noted in this paper, if $\beta$ is too small then corresponding objective is a poor approximation of the AUC and if it is too large then the objective approaches the sum of step functions making it hard to perform gradient descent. To alleviate the problem the training data is used to to choose a series of increasing $\beta$ values.  

\begin{figure}[ht]
\begin{center}
\includegraphics[width=.5\linewidth]{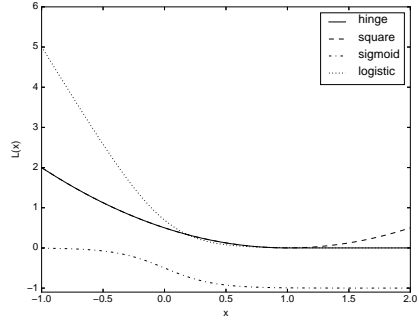} 
\end{center}
\caption{Plot of the loss functions $L$ with $\beta=5$.}
\label{fig:lossPlots}
\end{figure} 

For the weighting function $\phi(x) = x$, one can see that the first term in the objective follows directly from AUC (Equation \ref{eqn:auc}) by replacing the indicator function with the appropriate loss. The optimisation in this case is convex in the square and hinge loss cases for $\Um$ and $\Vm$ but not both simultaneously. This form of the objective however, does not take into account the preference for ranking items correctly at the top of the list. Consider instead $\phi(x) = \tanh(\rho x)$ for $x \geq 0$ and fixed $\rho > 0$, which is a concave function.  The term inside $\phi$ in Optimisation \ref{eqn:obj} represents a ranking loss for $i$th user and $p$th item and thus $y_p$ is high up in the ranked list if this quantity is small. The effect of choosing the hyperbolic tangent is that items with small losses have larger gradients towards the optimal solution and hence are prioritised. 

The distributions on the items allow some flexibility into the expectation we ultimately compute. Although the obvious choice is a uniform distribution, in practice relevant item distributions often follow the so-called \emph{power law} given by $p(y) \propto n_y^{-\tau}$ for some exponent $\tau \geq 1$, where $n_y$ is the number of times relevant item $y$ occurs in the complete (fully observed) data. In words, the power law says that there are a few items which are rated very frequently whereas most items are not frequently rated, corresponding to a bias on observing a rating for popular items. However, recommendations for less well-known items are considered valuable for users. Since we have incomplete data the generating distribution can be modelled in a similar way to that of the complete data (see e.g. \cite{steck2011item} for further motivation), 
\begin{displaymath} 
g(y) \propto \hat{p}(y)^{\tau'}. 
\end{displaymath}
where $\hat{p}(y)$ is the empirical probability of observing $y$ and $\tau' \geq 0$ is a exponent used to control the weight of items in the objective. The irrelevant and missing items form the complement of the relevant ones and hence we have  
\begin{displaymath} 
g'(y) \propto \hat{q}(y)^{\tau'} =  (1 - \hat{p}(y))^{\tau'}. 
\end{displaymath}
Notice that when $\tau' = 0$ we sample from the uniform distribution. Since we expect $\hat{p}(y)$ to be related to $n_y$ with a power law this implies that when $\tau' > 0$ we give preference to items for which $n_y$ is small and hence focus on less popular items. 

\subsection{Optimisation Algorithms}\label{subsec:opt}


To solve the above objectives one can use gradient descent methods which rely on computing the derivatives with respect to the parameters $\Um$ and $\Vm$. Here we present the derivatives for choices of loss $L$ and weighting function $\phi$, starting with the squared hinge loss and $\phi(x) = x$. Denote the objective function of Optimisation \ref{eqn:obj} as $\theta$ then the derivatives are, 
\begin{equation}\label{eqn:duHinge}
\begin{split}
 \frac{\delta \theta}{\delta \uv_i} &= \frac{1}{m} \sum_{p \in \omega_i} g(y_{p})  \left(\sum_{q \in \bar{\omega}_i}  (\vv_q - \vv_p)h(\gamma_{i,p,q}) g'(y_{q})\right)  + \frac{\lambda}{m} \uv_i,
\end{split}
\end{equation}
and
\begin{equation} \label{eqn:dvHinge}
\begin{split}
\frac{\delta \theta}{\delta \vv_j} =&   \frac{1}{m}  \sum_{i=1}^m  \uv_i \left( \mathcal{I}_{j \in \bar{\omega}_i} g'(y_{j}) \sum_{p \in \omega_i} g(y_{p})  h(\gamma_{i,p,j})  \right. \\ 
& \left.  -  \mathcal{I}_{j \in \omega_i}g(y_{j}) \sum_{q \in \bar{\omega}_i}  g'(y_{q}) h(\gamma_{i,j,q}) \right) + \frac{\lambda}{n} \vv_j,
\end{split}
\end{equation}
where $h(x) = \max(0, 1-x)$ and for convenience we use the notation $\mathcal{I}_{j \in \bar{\omega}_i} = \mathcal{I}(j \in \bar{\omega}_i)$. The squared loss is identical except that $h(x) = (1-x)$ in this case. It is worth noting that the term inside the outer sum of this derivative in the squared loss case can be written as: 
\begin{equation} 
\begin{split}
\frac{\delta \theta}{\delta \vv_j} =&   \frac{1}{m}  \sum_{i=1}^m  \uv_i \left( \mathcal{I}_{j \in \bar{\omega}_i} g'(y_{j})  (1 + \uv_i^T(\vv_j - \dot{\vv}_i)) \right. \\ 
 & \left. - \mathcal{I}_{j \in \omega_i}g(y_{j}) (1 + \uv_i^T( \ddot{\vv}_i - \vv_j)) \right) + \frac{\lambda}{n} \vv_j
\end{split}
\end{equation}
where $\dot{\vv}_i = \sum_{p \in \omega_i} g(y_{p}) \vv_p$ and $\ddot{\vv}_i = \sum_{q \in \bar{\omega}_i} g'(y_{q}) \vv_q$ are empirical expectations. The derivative with respect to $\uv_i$ can be treated in a similar way: 
\begin{equation} 
\frac{\delta \theta}{\delta \uv_i} = \frac{1}{m} (\ddot{\vv}_i - \dot{\vv_i} + \ddot{\wv}_i - \dot{\vv}_i \ddot{\vv}_i^T\uv_i - \ddot{\vv}_i \dot{\vv}_i^T\uv_i + \dot{\wv}_i) + \frac{\lambda}{m} \uv_i, 
\end{equation}
where $\dot{\wv}_i = \sum_{p \in \omega_i} g(y_{p}) \vv_p \vv_p^T\uv_i$ and $\ddot{\wv}_i = \sum_{q \in \bar{\omega}_i} g'(y_{q}) \vv_q \vv_q^T\uv_i$. Thus we have inexpensive ways to compute derivatives in the square loss case provided we have access to the expectations of particular vectors.  


The logistic and sigmoid losses are similar functions and their derivatives are computed as follows ($\beta$ is a user-defined parameter and $\phi(x) = x$ as before): 
\begin{equation}
\begin{split}
 \frac{\delta \theta}{\delta \uv_i} &= -\frac{\beta}{m} \sum_{p \in \omega_i} g(y_{p})  \left(\sum_{q \in \bar{\omega}_i}  (\vv_q - \vv_p)h(\gamma_{i,p,q}) g'(y_{q})\right)  + \frac{\lambda}{n} \uv_i,
\end{split}
\end{equation}
and 
\begin{equation} 
\begin{split}
\frac{\delta \theta}{\delta \vv_j} =&   -\frac{\beta}{m}  \sum_{i=1}^m  \uv_i \left( \mathcal{I}_{j \in \bar{\omega}_i} g'(y_{j}) \sum_{p \in \omega_i} g(y_{p})  h(\gamma_{i,p,j}) \right. \\ 
 & \left .- \mathcal{I}_{j \in \omega_i}g(y_{j}) \sum_{q \in \bar{\omega}_i}  g'(y_{q}) h(\gamma_{i,j,q}) \right) + \frac{\lambda}{n} \vv_j,
\end{split}
\end{equation}
in which $h(x) = \frac{e^{-\beta x}}{1 + e^{-\beta x}}$ for the logistic loss and $h(x) = \frac{1}{(1 + e^{-\beta x})^2}$ for the sigmoid loss.


We now consider the case in which we have the weighting function $\phi(x) = \tanh(\rho x)$ on the item losses in conjunction with a square or squared hinge loss. The gradients with respect to the objective $\theta$ are 
\begin{equation*}
\begin{split}
 \frac{\delta \theta}{\delta \uv_i} &= \frac{\rho}{m} \sum_{p \in \omega_i} g(y_{p})  \left(\sum_{q \in \bar{\omega}_i}  (\vv_q - \vv_p)h(\gamma_{i,p,q}) g'(y_{q})\right) 
 \\ &\left(1- \tanh^2\left( \frac{\rho}{2} \sum_{q \in \bar{\omega}_i} h(\gamma_{i,p,q})^2 g'(y_{q})\right)\right)  + \frac{\lambda}{m} \uv_i,
\end{split}
\end{equation*}
and the corresponding gradient with respect to $\vv_j$ is 
\begin{equation*} 
\begin{split}
\frac{\delta \theta}{\delta \vv_j} &=   \frac{\rho}{m}  \sum_{i=1}^m  \uv_i \left( \mathcal{I}_{j \in \bar{\omega}_i} g'(y_{j}) \sum_{p \in \omega_i} g(y_{p})  h(\gamma_{i,p,j}) \right. \cdot \left(1 - \tanh^2\left( \frac{\rho}{2}\sum_{q \in \bar{\omega}_i} h(\gamma_{i,p,q})^2 g'(y_{q}) \right) \right)    \\    
&  - \mathcal{I}_{j \in \omega_i}g(y_{j}) \sum_{q \in \bar{\omega}_i}  g'(y_{q}) h(\gamma_{i,j,q}) \cdot \left. \left(1 - \tanh^2\left( \frac{\rho}{2} \sum_{\ell \in \bar{\omega}_i} h(\gamma_{i,j,\ell})^2 g'(y_{\ell}) \right) \right)\right) + \frac{\lambda}{n} \vv_j,
\end{split}
\end{equation*}
with $h(x) = \max(0, 1-x)$ in the square hinge loss case and $h(x) = (1-x)$ for the square loss. When we compare the above derivatives to Equations \ref{eqn:duHinge} and \ref{eqn:dvHinge} for the square/hinge loss functions we can see that there is an additional weighting on the relevant items given by $f(\uv_i, \vv_p) = 1- \tanh^2\left( \frac{\rho}{2} \sum_{q \in \bar{\omega}_i} h(\uv_i^T\vv_p - \uv_i^T\vv_q)^2 g'(y_{q})\right)$ which will naturally increase the size of gradient for items with small losses and hence those high up in the ranking.  

Thus we have presented the derivatives for each of the loss functions we proposed earlier in this section as well the hyperbolic tangent weighting scheme. The derivative with respect to $\vv_j$ is generally the most costly to find and we can see that the computational complexity of computing it is $\mathcal{O}(mn)$ and hence the derivative with respect to the complete matrix $\Vm$ is $\mathcal{O}(mn^2)$. The complexity of the derivative with respect to $\Um$ is identical although in practice it is quicker to compute. Fortunately, the expectations for both derivatives can be estimated effectively using $\kappa_\mathcal{W}$ users, and $\kappa_\mathcal{Y}$ items from $\omega_i$ and $\bar{\omega}_i$. This reduces the complexity per iteration of computing the derivative with respect to $\Vm$ to $\mathcal{O}(\kappa_\mathcal{W}\kappa_\mathcal{Y}^2)$. 

Using these approximate derivatives we can apply stochastic gradient descent to solve Optimisation \ref{eqn:obj}. Define $\theta'(\Um, \Vm)$ as the approximate subsampled objective, then one then walks in the negative direction of the approximate derivatives using \emph{average stochastic gradient descent} (average SGD, \cite{polyak1992acceleration}). Algorithm \ref{alg:maxAuc} shows the pseudo code for solving the optimisation. Given initial values of $\Um$ and $\Vm$ (using random Gaussian elements, for example) we define an iteration as $\max(m, n)$ gradient steps using a random permutation of indices $i \in [1, m]$ and $j \in [1, n]$. It follows that each iteration updates all of the rows of $\Um$ and $\Vm$ at least once according to the learning rate $\alpha \in \mathbb{R}^+$ and the corresponding derivative. Note in line \ref{line:assignUv} that if $t$ exceeds the size of the corresponding vector we wrap around to the start. Under average SGD one maintains, during the optimisation, matrices $\bar{\Um}$ and $\bar{\Vm}$ which are weighted averages of the user and item factors. These matrices can be shown to converge more rapidly than $\Um$ and $\Vm$, see \cite{polyak1992acceleration} for further details. The algorithm concludes when the maximum number of iterations $T$ has been reached or convergence is achieved by looking at the average objective value for the the most recent iterations. In the following text we refer to this algorithm as Matrix Factorisation with AUC (MFAUC).

\begin{algorithm}
\begin{algorithmic}[1]
\REQUIRE Ratings $\Xm$, solutions $\Um$, $\Vm$, iterations $T$, average start $T_0$, learning rate $\alpha$, convergence threshold $\epsilon$  
\STATE $\bar{\Um} = \Um$ and $\bar{\Vm} = \Vm$
\STATE Vector $\zv$ is permutation of $\{1, \ldots, m\}$ and $\zv'$ is permutation of $\{1, \ldots, n\}$
\STATE $\theta'_0 = \theta'(\Um, \Vm)$, $\theta'_1 = \theta'_0 + \epsilon$,  $s=1$
\WHILE{$s \neq T$ and $|\theta'_s - \theta'_{s-1}| \geq \epsilon$}
  \FOR{$t = 1,\ldots,\max(m, n)$}
    \STATE Let $i = \zv_{t}$ and $j = \zv'_{t}$ \label{line:assignUv}
    \STATE $\uv_i \leftarrow \uv_i - \alpha \frac{\delta \theta'}{\delta \uv_i}$ and $\vv_j \leftarrow \vv_j - \alpha \frac{\delta \theta'}{\delta \vv_j}$
    \IF{$T \geq T_0$} 
      \STATE $\bar{\uv}_i \leftarrow \frac{i}{i+1}\bar{\uv}_i + \frac{1}{i+1}\uv_i$ and $\bar{\vv}_j \leftarrow \frac{i}{i+1}\bar{\vv}_j  + \frac{1}{i+1}\vv_j$
     \ELSE 
     \STATE $\bar{\uv}_i = \uv_i$ and $\bar{\vv}_j = \vv_j$ 
    \ENDIF 
  \ENDFOR 
  \STATE Update $\theta'_s = \theta'(\bar{\Um}, \bar{\Vm})$, $s = s+1$ 
\ENDWHILE 
\RETURN Solutions $\bar{\Um}$, $\bar{\Vm}$
\end{algorithmic}
\caption{Pseudo code for Matrix Factorisation with AUC}\label{alg:maxAuc}
\end{algorithm}

\subsubsection{Parallel Algorithm}

A disadvantage of the optimisation just described is that it cannot easily be implemented to make use of model parallel computer architectures as each intermediate solution depends on the previous one. To compensate, we build upon the  Distributed SGD (DSDG) algorithm of \cite{gemulla2011large}. The algorithm works on the assumption that the loss function we are optimising can be split up into a sum of local losses, each local loss working on a subset of the elements of $\Xm_{ij}$. The algorithm proceeds by partitioning $\Xm$ into $d_1 \times d_2$ fixed blocks (sub-matrices) and each block is processed by a separate process/thread ensuring that no two concurrent blocks share the same row or column. This constraint is required so that updates to the rows of $\Um$ and $\Vm$ are independent, see Figure \ref{fig:parallelSGD}. The blocks do not have to be contiguous or of uniform size, and in our implementation blocks are sized so that the number of nonzero elements within each one is approximately constant. The algorithm terminates after every block has been processed at least $T$ times. 

\begin{figure}
\begin{center}
\includegraphics[width=.5\linewidth]{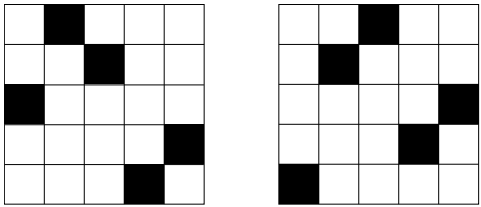} 
\end{center}
\caption{Illustration of the principal behind DSGD. Each of the two large squares represents a matrix divided into blocks, and each black square represents a process working on a block.}
\label{fig:parallelSGD}
\end{figure} 

For AUC-based losses we require a pairwise comparison of items for each row and therefore the loss cannot easily be split into local losses. To get around this issue we modify DSGD as follows: at each iteration we randomly assign rows/columns to blocks, i.e. blocks are no longer fixed throughout the algorithm. We then concurrently compute empirical estimates of the loss above within all blocks by restricting the corresponding positive and negative items, and repeat this process until convergence or for $T$ iterations. 

\section{Consistency and Generalisation} \label{sec:theory}

In this section we will discuss issues relating to the consistency of the above optimisation, as well as the generalisation performance on unseen data. First we address the question of whether optimising a surrogate of the AUC will lead to improving the AUC itself.  We draw upon the work of \cite{gao2012consistency} to show that our chosen loss functions are consistent. In the bipartite ranking case the square hinge loss is shown to be consistent in \cite{gao2012consistency} and a similar result for the square loss is proven in \cite{gao2013one}. Therefore we concentrate on the sigmoid and logistic functions and additionally show how the consistency results apply to the matrix factorisation scenario we consider in this paper. 

To begin we define consistency in a more formal sense, starting with a more general definition of the AUC than Equation \ref{eqn:auc1}. In this case, the scoring function $s$ can result in the same scores for different items,
\begin{displaymath} 
\mbox{AUC}_\mathcal{D}(s) = \mathbb{E}_{\mathcal{D}}[\mathcal{I}((r - r')(s(y) - s(y')) > 0) +  \frac{1}{2}\mathcal{I}(s(y) = s(y')) | r \neq r')], 
\end{displaymath}
where $r, r' \in \{-1, +1\}$ are the labels respectively of items $y, y'$. We can alternatively state the AUC in terms of a risk so that maximising AUC corresponds to minimising this quantity: 
\begin{displaymath} 
R(s) = \mathbb{E}_{(y, r), (y', r') \sim \mathcal{D}}[\ell(s, (y, r), (y', r')) | r \neq r'] 
\end{displaymath}
in which $\ell(s, (y, r), (y', r')) = \mathcal{I}((r - r')(s(y) - s(y')) < 0) +  \frac{1}{2}\mathcal{I}(s(y) = s(y'))$. If we define $\eta(y) = P[r = +1|y]$ then we see that the risk can be expressed as 
\begin{displaymath} 
R(s) \propto \mathbb{E}_{(y, y') \sim \mathcal{D}_\mathcal{Y}^2}[\eta(y)(1-\eta(y'))\ell'(s, y, y') + \eta(y')(1-\eta(y))\ell'(s, y', y)],  
\end{displaymath}
where $\ell'(s, y, y') = \mathcal{I}(s(y) - s(y') < 0) + \frac{1}{2}\mathcal{I}(s(y) = s(y'))$ and $\mathcal{D}_\mathcal{Y}$ is the marginal distribution over $\mathcal{Y}$. Notice that if we assume that $\eta(y) > \eta(y')$ then we would prefer a function $s$ such that $s(y) > s(y')$, and a similar results holds if we swap $y$ and $y'$. This allows us to introduce the \emph{Bayes risk} $R(s^*)$ where $s^*$ is defined as follows: 
\begin{eqnarray*} 
s^* &=& \mbox{arginf}_s R(s)\ \\ 
&=& \{s: (s(y) - s(y'))(\eta(y) - \eta(y')) > 0 \mbox{ if } \eta(y) \neq \eta(y') \},
\end{eqnarray*}
where $s$ is chosen from all measurable functions. Since we replace the indicator with a surrogate loss function, define $L'(s, y, y') = L(s(y) - s(y'))$ then we can write the corresponding risk as, 
\begin{displaymath} 
R_L(s) \propto \mathbb{E}_{(y, y') \sim \mathcal{D}_\mathcal{Y}^2}[\eta(y)(1-\eta(y'))L'(s, y, y') + \eta(y')(1-\eta(y))L'(s, y', y)],   
\end{displaymath}
and the optimal scoring function with respect to this risk is $s^*_L =  \mbox{arginf}_s R_L(s)$. With these definitions, we can define consistency.

\begin{definition}[Consistency,  \cite{gao2012consistency}] 
The surrogate loss $L$ is said to be consistent with AUC if for every sequence $\{s^{\langle i \rangle}(y)\}_{i \geq 1}$, the following holds over all distributions $\mathcal{D}$ on $\mathcal{Y} \times \mathcal{R}$: 
\begin{displaymath} 
 R_L(s^{\langle i \rangle}) \rightarrow R_L(s^*_L) \mbox{ then } R(s^{\langle i \rangle}) \rightarrow R(s^*).
\end{displaymath}
\end{definition}
Thus, a consistent loss function will lead to an optimal Bayes risk in the limit of an infinite sample size. Furthermore, a sufficient condition is given for AUC as follows 
\begin{theorem}[Sufficiency for AUC consistency, \cite{gao2012consistency}]  \label{thm:sufficiency}
The surrogate loss  $L'(s, y, y') = L(s(y) - s(y'))$ is consistent with AUC if $L: \mathbb{R} \rightarrow \mathbb{R}$ is a convex, differentiable and non-increasing function with $\Delta L(0) < 0$. 
\end{theorem}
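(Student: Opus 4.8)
The plan is to reduce the statement to a pointwise (conditional) analysis of a single pair of items and then lift it back to the full risks by a calibration argument in the style of classification-calibration theory (Bartlett--Jordan--McAuliffe). Throughout I read $\Delta L(0)$ as the derivative of $L$ at $0$. First I would fix a pair $(y,y')$, write $\eta = \eta(y)$, $\eta' = \eta(y')$ and $\delta = s(y)-s(y')$, and observe that both $R$ and $R_L$ are expectations over $(y,y')\sim\mathcal{D}_\mathcal{Y}^2$ whose integrands depend on $s$ only through $\delta$. Setting $a=\eta(1-\eta')$ and $b=\eta'(1-\eta)$, so that $a-b=\eta-\eta'$, the conditional surrogate risk is $C_L(\delta)=aL(\delta)+bL(-\delta)$ and the conditional true risk is $C(\delta)=a\,\ell'(\delta)+b\,\ell'(-\delta)$ with $\ell'(\delta)=\mathcal{I}(\delta<0)+\tfrac12\mathcal{I}(\delta=0)$. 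A direct evaluation gives $C^*:=\inf_\delta C(\delta)=\min(a,b)$, attained exactly when $\sign(\delta)=\sign(\eta-\eta')$, and any wrongly ordered $\delta$ (wrong sign, or a tie) incurs a conditional excess of at least $\tfrac12|\eta-\eta'|$.

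The crux is the calibration step. Because $L$ is convex and differentiable, $C_L$ is convex and differentiable with $C_L'(\delta)=a\,\Delta L(\delta)-b\,\Delta L(-\delta)$, whence $C_L'(0)=(a-b)\,\Delta L(0)=(\eta-\eta')\,\Delta L(0)$. Since $\Delta L(0)<0$, when $\eta>\eta'$ we get $C_L'(0)<0$, so $C_L$ strictly decreases at $0$ and hence $C_L(\delta)\ge C_L(0)$ for every $\delta\le 0$ while $\inf_\delta C_L(\delta)<C_L(0)$; the minimizing direction therefore satisfies $\delta>0$, i.e. $s(y)>s(y')$, matching the Bayes order, and symmetrically for $\eta<\eta'$. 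This already shows that $s^*_L$ reproduces the Bayes ordering $s^*$ and hence attains $R(s^*)$.

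To upgrade order-preservation into convergence of the risks I would introduce the calibration function
\[
\psi(\theta)=\inf\big\{\,C_L(\delta)-C_L^*(\eta,\eta')\;:\;C(\delta)-C^*(\eta,\eta')\ge\theta\,\big\},
\]
the infimum being over all admissible $(\eta,\eta',\delta)$. By the previous paragraph any wrongly ordered $\delta$ satisfies $C_L(\delta)\ge C_L(0)$, so $\psi(\theta)\ge\inf_{|\eta-\eta'|\ge\theta}\big(C_L(0)-C_L^*\big)$; since $\theta>0$ confines $(\eta,\eta')$ to a compact subset of $[0,1]^2$ off the diagonal on which $C_L(0)-C_L^*$ is lower semicontinuous and strictly positive, this infimum is attained and positive, giving $\psi(\theta)>0$ for all $\theta>0$ with $\psi(0)=0$. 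Passing to the convex lower envelope of $\psi$ and applying Jensen's inequality to $R_L(s)-R_L^*=\mathbb{E}[C_L(\delta)-C_L^*]\ge\mathbb{E}[\psi(C(\delta)-C^*)]\ge\psi(R(s)-R^*)$ produces a single scalar bound $\psi(R(s)-R^*)\le R_L(s)-R_L^*$. Consistency is then immediate: $R_L(s^{\langle i\rangle})\to R_L^*$ forces $\psi(R(s^{\langle i\rangle})-R^*)\to 0$, and since $\psi$ vanishes only at $0$ and is continuous there, $R(s^{\langle i\rangle})\to R^*$. Finally, as the factorisation scores $(\Um\Vm^T)_{ij}$ sweep out all of $\mathbb{R}$ when the factors vary, the infima over measurable $s$ are approached within the factorisation class and the conclusion transfers to Optimisation \ref{eqn:obj}.

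The step I expect to be the main obstacle is the uniform control hidden in the calibration function, namely guaranteeing $\psi(\theta)>0$ for every fixed $\theta>0$. The difficulty concentrates on the diagonal $\eta=\eta'$, where both the true gap $\tfrac12|\eta-\eta'|$ and the surrogate gap $C_L(0)-C_L^*$ vanish, so one must show their ratio does not degenerate; this is precisely where convexity and $\Delta L(0)<0$ are indispensable. It is also where one must handle non-coercive losses, the non-increasing $L$ whose conditional minimizer escapes to $\pm\infty$ (as for the logistic loss): there $C_L^*$ need not be attained, so the finiteness and lower semicontinuity of $(\eta,\eta')\mapsto C_L^*(\eta,\eta')$ must be verified through minimizing sequences in $\delta$ rather than through an attained minimizer before the compactness argument can be invoked.
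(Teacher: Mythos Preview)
The paper does not prove this theorem; it is quoted verbatim from Gao and Zhou (2012) and then applied as a black box to the logistic loss. There is therefore no paper-side argument to compare against, only the original source.

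Your approach is a Bartlett--Jordan--McAuliffe style calibration argument transplanted to the pairwise setting. The pointwise analysis is correct: with $a=\eta(1-\eta')$, $b=\eta'(1-\eta)$ and $C_L(\delta)=aL(\delta)+bL(-\delta)$ you rightly obtain $C_L'(0)=(\eta-\eta')\,\Delta L(0)$ and, from convexity, $C_L(\delta)\ge C_L(0)$ on the wrong half-line; your compactness argument for the positivity of $\psi$ is also sound.

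The genuine gap is the identity
\[
R_L(s)-R_L^{*}=\mathbb{E}\bigl[C_L(\delta)-C_L^{*}\bigr].
\]
This silently assumes $R_L^{*}=\inf_{s}R_L(s)=\mathbb{E}_{y,y'}[C_L^{*}(\eta(y),\eta(y'))]$, i.e.\ that the unconstrained pairwise minimisers $\delta^{*}(y,y')$ can be realised as $s(y)-s(y')$ for a single scoring function $s$. In binary classification this is free, because the conditional problem is indexed by a single $x$ and $s(x)$ is unconstrained. In pairwise ranking it is not: the differences must be additive across items, so one needs $\delta^{*}(y_1,y_2)+\delta^{*}(y_2,y_3)=\delta^{*}(y_1,y_3)$ for every triple. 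This does hold for the logistic and exponential losses, where $\delta^{*}=\log(a/b)=h(\eta)-h(\eta')$, but it fails for generic convex non-increasing $L$; take for instance $L(x)=e^{-x}+e^{-2x}$, whose stationarity equation $a(e^{-\delta}+2e^{-2\delta})=b(e^{\delta}+2e^{2\delta})$ does not yield additive $\delta^{*}$. When the equality $R_L^{*}=\mathbb{E}[C_L^{*}]$ fails, your chain of inequalities only delivers $R_L(s)-\mathbb{E}[C_L^{*}]\ge\psi^{**}(R(s)-R^{*})$, and $R_L(s^{\langle i\rangle})\to R_L^{*}>\mathbb{E}[C_L^{*}]$ no longer forces the right-hand side to zero.

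Gao and Zhou avoid this by a different mechanism: they work on finite subsets of items and show that any minimiser of the surrogate risk over the vector $(s(y_1),\ldots,s(y_n))$, now a genuine finite-dimensional convex problem in which the additive constraint is built in, must preserve the order of the $\eta$'s; a separate reduction then lifts this finite-set order calibration to consistency. To repair your argument you would either have to establish $R_L^{*}=\mathbb{E}[C_L^{*}]$ under the stated hypotheses (false in general, as above) or redefine $\psi$ relative to the constrained infimum $\inf_{s}$ rather than the free $\inf_{\delta}$, which is in effect what the finite-subset route does. The obstacle you flagged, uniform positivity of $\psi$ near the diagonal and non-attained minimisers for non-coercive $L$, is real but secondary; the structural constraint $\delta=s(y)-s(y')$ is the step that actually separates the bipartite-ranking proof from the classification one.
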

These theorems allow us to prove that the sigmoid and logistic losses are consistent with AUC. 
\begin{theorem} 
Both the sigmoid $-\sigma(x)$, $\sigma(x) = 1/(1+e^{-\beta x})$, and logistic loss $-\ln(\sigma(x)) = \ln(1/(1+e^{-\beta x}))$ functions are consistent with AUC.  
\end{theorem}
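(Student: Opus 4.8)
The plan is to treat the two losses separately, because only one of them falls under the umbrella of Theorem \ref{thm:sufficiency}. For the logistic loss I would simply verify the four hypotheses of that theorem and invoke it; for the sigmoid loss, whose graph is \emph{not} convex, I would instead argue directly from the structure of the conditional surrogate risk. I expect this split to be forced on me, and the sigmoid case to be where the real work lies.

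First, consider the logistic loss $L(x) = -\ln(\sigma(x)) = \ln(1 + e^{-\beta x})$. It is manifestly differentiable, and a short computation gives $L'(x) = -\beta(1 - \sigma(x))$, which is strictly negative for all $x$, so $L$ is non-increasing with $L'(0) = -\beta/2 < 0$. Differentiating once more yields $L''(x) = \beta^2 \sigma(x)(1 - \sigma(x)) > 0$, so $L$ is convex. All the hypotheses of Theorem \ref{thm:sufficiency} (convex, differentiable, non-increasing, negative derivative at the origin) are thus met, and consistency with AUC follows immediately.

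The sigmoid loss $L(x) = -\sigma(x)$ is the delicate case, and this is where I expect the main obstacle. Since $\sigma$ is convex on $(-\infty,0)$ and concave on $(0,\infty)$, the loss $-\sigma$ is not convex, so Theorem \ref{thm:sufficiency} is simply inapplicable and consistency must be shown by hand. I would exploit the fact that AUC-consistency is a pointwise property of the conditional surrogate risk, exactly as in the framework used to set up $R_L$: fixing a pair of items with posteriors $\eta(y),\eta(y')$, write $a = \eta(y)(1-\eta(y'))$ and $b = \eta(y')(1-\eta(y))$, so that $a - b = \eta(y) - \eta(y')$ carries the sign of the correct ordering. Viewed as a function of the score gap $t = s(y) - s(y')$, the conditional surrogate risk is $\psi(t) = a L(t) + b L(-t)$. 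The key algebraic simplification is the identity $\sigma(-t) = 1 - \sigma(t)$, which collapses $\psi$ to $\psi(t) = (b-a)\sigma(t) - b$, a strictly monotone function of $\sigma(t)$.

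From this closed form the conclusion is clean: when $a > b$ the infimum of $\psi$ is approached as $t \to +\infty$, and when $a < b$ as $t \to -\infty$, so the infimising score gap always has the same sign as $\eta(y)-\eta(y')$, which is precisely the ordering demanded by the Bayes rule $s^*$. The subtlety I would have to handle with care is that this infimum is attained only in the limit $t \to \pm\infty$, not at a finite minimiser; I therefore cannot point to an optimal score, and must instead check that any sequence $s^{\langle i \rangle}$ driving $R_L(s^{\langle i \rangle}) \to R_L(s^*_L)$ forces the gaps $s^{\langle i\rangle}(y) - s^{\langle i \rangle}(y')$ to diverge with the correct sign for $\mathcal{D}_\mathcal{Y}^2$-almost every pair, hence $R(s^{\langle i \rangle}) \to R(s^*)$, matching the definition of consistency. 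Finally, both conclusions transfer to the matrix factorisation setting by taking $s(w_i, y_j) = (\Um\Vm^T)_{ij}$ and applying the argument user by user.
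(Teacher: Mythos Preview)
Your proposal is correct and follows essentially the same route as the paper: the logistic case is dispatched via Theorem \ref{thm:sufficiency} by checking convexity, monotonicity, and $L'(0)<0$, while the sigmoid case, being non-convex, is handled by a direct pairwise analysis using the identity $\sigma(t)+\sigma(-t)=1$ to reduce the conditional surrogate risk to a monotone function of $\sigma(t)$. Your treatment is in fact slightly more careful than the paper's, which works over a finite item set and gestures at the limiting behaviour without explicitly tracking the sequence argument required by the definition of consistency.
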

\begin{proof}
We will start with the logistic function. Assume that $\beta > 0$ and $x$ is finite,
\begin{displaymath} 
\frac{\delta -\ln(\sigma)}{\delta x} = -\frac{-\beta e^{-\beta x}}{(1+e^{-\beta x})} < 0,  
\end{displaymath}
which implies a non-increasing function, in particular the derivative at zero is $-\beta/2$. In addition the logistic loss is convex since for all finite $x$,
\begin{displaymath} 
 \frac{\delta^2 -\ln(\sigma)}{\delta x^2} = \frac{\beta^2 e^{-\beta x}}{(1+e^{-\beta x})} \left(1 - \frac{e^{-\beta x}}{(1+e^{-\beta x})} \right) > 0, 
\end{displaymath}
where the first term in the derivative is greater than zero and the second term in parenthesis is between 0 and 1. An application of Theorem  \ref{thm:sufficiency} gives the required result. 

For the sigmoid function, consider a set of items $S = \{y_1, y_2, \ldots, y_n\}$ with corresponding conditional probabilities on being positive $\eta(y_1), \eta(y_2), \ldots, \eta(y_m)$ then we can find the following risk (we use notation $\eta_i = \eta(y_i)$ and $s_i = s(y_i)$ for convenience) 
\begin{displaymath} 
 R_L(s) \propto R'_L(s) = - \sum_{i < j} \left( \eta_i(1- \eta_j) \frac{1}{1+e^{-(s_i - s_j)}} + \eta_j(1- \eta_i) \frac{1}{1+e^{-(s_j - s_i)}} \right). 
\end{displaymath}
For each pair of terms in this sum notice first that $1/(1+e^{-x}) + 1/(1+e^x) = 1$ for all $x$. It follows that if $\eta_i > \eta_j$ then the first term should be maximised by increasing $s_i - s_j$ otherwise the second one should be maximised to minimise the risk. Therefore, in the limiting case 
\begin{displaymath} 
 R'_L(s)  \rightarrow - \sum_{\eta_i > \eta_j} \eta_i(1- \eta_j) ,
\end{displaymath}
 and we have $(s_i - s_j)(\eta_i > \eta_j) > 0$ when $\eta_i \neq \eta_j$ as in the Bayes risk. 
\end{proof}

As we have not made any assumption on scoring functions in our previous results, the AUC is simply generalised to the expectation over users as follows
\begin{displaymath} 
\mbox{AUC}_\mathcal{D}(s) = \mathbb{E}_\mathcal{W \times \mathcal{Y}}[\mathcal{I}((r - r')(s(w, y) - s(w, y')) > 0) +  \frac{1}{2}\mathcal{I}(s(y) = s(y')) | r \neq r')],
\end{displaymath}
where $\mathcal{D}$ is now is a distribution over users and items and we redefine the scoring function as $s: \mathcal{W} \times \mathcal{Y} \mapsto \mathbb{R}$. Thus we have shown that all the loss functions we consider are consistent with the AUC in the multi-user scenario. 

\subsection{Generalisation Bound} 

We now turn to another critical question about our algorithm relating to the generalisation. In particular we look at Rademacher theory, which is a data-dependent way of studying generalisation performance. We assume that the observations are sampled from a distribution $\bar{\mathcal{D}}$ over $\mathcal{W} \times \mathcal{Y} \times \mathcal{R}$. Recall that only positive ratings are observed, hence the sample $S = \cup_{i=1}^m \{(w_i, y_{\omega_{i1}}), \ldots, (w_i, y_{\omega_{in_i}})\}$ is drawn from the distribution $\mathcal{D} = \mathcal{D}_1^{n_1} \times \cdots \times \mathcal{D}_m^{n_m}$ where we use the shorthand $n_i = |\omega_i|$. Now consider a class of functions $\mathcal{Q}$ mapping from $\mathcal{W}\times\mathcal{Y}^2$ to $[0, 1]$ then the AUC can be maximised by minimising 
\begin{equation}\label{eqn:expQ}
\hat{\mathbb{E}}_S[Q] = \frac{1}{m}\sum_{i=1}^m \frac{1}{n_i n_i'} \sum_{p \in \omega_i} \sum_{q \in \bar{\omega_i}} Q(w_i, y_{p}, y_{q}), 
\end{equation}
in the case that $\mathcal{Q} = \{Q: (w, y, y') \mapsto \mathcal{I}(s(w, y) \leq s(w, y')), s \in \mathcal{S}\}$. Likewise we can substitute any class of loss functions $\mathcal{Q}$ to be minimised, for example the surrogate functions defined above. Noting this, we can now present the idea of a Rademacher variables which are uniformly distributed $\{-1, +1\}$ independent random variables. The following definition is an empirical Rademacher average for all users 
\begin{displaymath} 
\hat{R}^{AUC}(\mathcal{Q}) = 2\mathbb{E}_{\nu} \left[\sup_{Q \in \mathcal{Q}} \frac{1}{m}\sum_{i=1}^m \frac{\nu_i}{n_i n_i'} \sum_{p \in \omega_i} \sum_{q \in \bar{\omega_i}} Q(w_i, y_{p}, y_{q}) \right],  
\end{displaymath}
where the $\nu_i$'s are independent Rademacher random variables (note the close connection to Equation \ref{eqn:expQ}). This expectation is an empirical measure of the capacity of a function class because it measures how well functions in that class can correlate with random data. We define the Rademacher complexity of $\mathcal{Q}$ as 
\begin{displaymath} 
R^{AUC}(\mathcal{Q}) = \mathbb{E}_S[\hat{R}^{AUC}(\mathcal{Q})], 
\end{displaymath}
where $S$ is drawn over $\mathcal{D}_1^{n_1} \times \cdots \times \mathcal{D}_m^{n_m}$. In the following lemma we provide a relationship between these two quantities using McDiarmid's Theorem.
\begin{theorem}[McDiarmid, \cite{McDiarmid1989}]\label{thm:mcdiarmid}
Let $X_1, \ldots, X_n$ be independent random variables taking values in a set $A$ and assume $f: A^n \rightarrow \mathbb{R}$ satisfies 
\begin{displaymath} 
\sup_{x_1, \ldots, x_n, \hat{x}_i \in A} |f(x_1, \ldots, x_n) - f(x_1, \ldots, \hat{x_i}, x_{i+1}, \ldots, x_n) | \leq c_i,    
\end{displaymath}
for all $i=1,\ldots,n$. Then for all $\epsilon > 0$, 
\begin{displaymath} 
P\{f(X_1, \ldots, X_n) - \mathbb{E}[f(X_1, \ldots, X_n)] \geq \epsilon \} \leq \exp\left(\frac{-2\epsilon^2}{\sum_{i=1}^n c_i^2}\right), 
\end{displaymath}
and 
\begin{displaymath} 
P\{\mathbb{E}[f(X_1, \ldots, X_n)] - f(X_1, \ldots, X_n)  \geq \epsilon \} \leq \exp\left(\frac{-2\epsilon^2}{\sum_{i=1}^n c_i^2}\right). 
\end{displaymath}
\end{theorem}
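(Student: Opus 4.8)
The plan is to prove this via the martingale method, constructing the Doob martingale associated with $f(X_1, \ldots, X_n)$ and then applying an Azuma--Hoeffding style exponential bound. Write $Z = f(X_1, \ldots, X_n)$ and introduce the filtration $\mathcal{F}_k = \sigma(X_1, \ldots, X_k)$ with $\mathcal{F}_0$ trivial. I would define $V_k = \mathbb{E}[Z \mid \mathcal{F}_k]$, so that $V_0 = \mathbb{E}[Z]$ and $V_n = Z$, and set the martingale differences $D_k = V_k - V_{k-1}$. Then $Z - \mathbb{E}[Z] = \sum_{k=1}^n D_k$, and $\mathbb{E}[D_k \mid \mathcal{F}_{k-1}] = 0$ by the tower property, so $(V_k)$ is indeed a martingale.

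The crucial step, and the one I expect to be the main obstacle, is to show that conditioned on $\mathcal{F}_{k-1}$ each difference $D_k$ takes values in an interval of length at most $c_k$. To this end I would fix $x_1, \ldots, x_{k-1}$ and define $g(x) = \mathbb{E}[Z \mid X_1 = x_1, \ldots, X_{k-1} = x_{k-1}, X_k = x]$, where the remaining coordinates $X_{k+1}, \ldots, X_n$ are integrated out against their laws; here the independence of the $X_i$ is essential, since it guarantees that conditioning on $X_k = x$ does not alter the distribution of the later coordinates. For two values $x, x'$ the difference $g(x) - g(x')$ equals the expectation over $X_{k+1}, \ldots, X_n$ of $f(\ldots, x, \ldots) - f(\ldots, x', \ldots)$, which the bounded-differences hypothesis bounds by $c_k$ in absolute value. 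Hence $g$ has oscillation at most $c_k$, and since $D_k = g(X_k) - \mathbb{E}[g(X_k)]$ conditionally on $\mathcal{F}_{k-1}$, the conditional range of $D_k$ is at most $c_k$.

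With this range control in hand I would invoke Hoeffding's lemma, the standard fact that a zero-mean random variable confined to an interval of length $c_k$ satisfies $\mathbb{E}[e^{sD_k} \mid \mathcal{F}_{k-1}] \leq \exp(s^2 c_k^2 / 8)$ for every $s \in \mathbb{R}$. A Chernoff argument then gives, for $s > 0$,
\[
P\{Z - \mathbb{E}[Z] \geq \epsilon\} \leq e^{-s\epsilon} \, \mathbb{E}\!\left[\exp\Big(s \sum_{k=1}^n D_k\Big)\right].
\]
I would peel off the factors one at a time from the outside in: conditioning on $\mathcal{F}_{n-1}$ and pulling the first $n-1$ differences out of the conditional expectation, the bound on the last factor yields $\mathbb{E}[\exp(s\sum_k D_k)] \leq \exp(s^2 c_n^2/8)\,\mathbb{E}[\exp(s\sum_{k<n} D_k)]$, and iterating produces the product bound $\exp(s^2 \sum_{k=1}^n c_k^2 / 8)$.

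Finally, optimising the exponent $-s\epsilon + s^2 \sum_k c_k^2 / 8$ over $s > 0$, which is minimised at $s = 4\epsilon / \sum_k c_k^2$, gives the stated upper-tail bound $\exp(-2\epsilon^2 / \sum_k c_k^2)$. The lower-tail inequality follows immediately by applying the identical argument to $-f$, which satisfies the same bounded-differences condition with the same constants $c_k$.
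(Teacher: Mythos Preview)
Your proof is the standard and correct martingale argument for McDiarmid's inequality: build the Doob martingale, use independence plus the bounded-differences hypothesis to control the conditional range of each increment, apply Hoeffding's lemma termwise, and optimise the Chernoff exponent. Each step is sound as you have written it.

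However, the paper does not actually prove this theorem at all. It is stated as a cited external result (McDiarmid, 1989) and then invoked as a black box in the proofs of Theorem~\ref{thm:rad} and Lemma~\ref{lem:sup1}. So there is nothing to compare against: you have supplied a full proof where the paper simply quotes the inequality. Your argument is the classical one and would be an appropriate addition if the paper were to be made self-contained on this point.
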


We show that the Rademacher complexities are related as follows
\begin{theorem}\label{thm:rad} 
Let $\mathcal{Q}$ be a function class mapping from  $\mathcal{W}\times\mathcal{Y}^2$ to $[0, 1]$ and define sample $S = \cup_{i=1}^m \{(w_i, y_{\omega_{i1}}), \ldots, (w_i, y_{\omega_{in_i}})\}$ drawn from distribution $\mathcal{D}_1^{n_1} \times \cdots \times \mathcal{D}_m^{n_m}$. Then with probability at least $1-\delta$ the following is true 
\begin{displaymath} 
 R^{AUC}(\mathcal{Q}) \leq \hat{R}^{AUC}(\mathcal{Q})  +  \sqrt{\frac{2\ln(1/\delta) (n-1)^2}{m^2} \sum_{i=1}^{m} \frac{1}{|\omega_i||\bar{\omega}_i|^2}}.
\end{displaymath}
\end{theorem}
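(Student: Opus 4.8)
The plan is to recognise that $R^{AUC}(\mathcal{Q}) = \mathbb{E}_S[\hat{R}^{AUC}(\mathcal{Q})]$ is precisely the expectation of the empirical quantity $\hat{R}^{AUC}(\mathcal{Q})$, so the asserted inequality is a one-sided concentration statement obtained by applying McDiarmid's theorem (Theorem \ref{thm:mcdiarmid}) to the function $f(S) = \hat{R}^{AUC}(\mathcal{Q})$. The $\sum_{i} n_i$ independent random variables are the individual observed positive items $y_{\omega_{ij}}$, each drawn from $\mathcal{D}_i$; since the sample factorises as $\mathcal{D}_1^{n_1}\times\cdots\times\mathcal{D}_m^{n_m}$, these are genuinely independent and McDiarmid applies. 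The two remaining ingredients are the bounded-differences constants $c_{ij}$ and the inversion of the resulting tail bound to solve for $\epsilon$ at confidence $1-\delta$.

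The heart of the argument, and the step I expect to be the main obstacle, is the bounded-differences estimate. Fix user $i$ and replace its $j$-th positive item $a$ by some other item $b$. Because $\bar{\omega}_i$ is the complement of $\omega_i$ in the $n$ items, this single change moves $a$ out of $\omega_i$ into $\bar{\omega}_i$ and $b$ the other way, so both index sets are perturbed while their cardinalities $n_i=|\omega_i|$ and $n_i'=|\bar{\omega}_i|$ are unchanged. I would then count the pairs $(p,q)$ in the inner double sum $\sum_{p\in\omega_i}\sum_{q\in\bar{\omega}_i}Q(w_i,y_p,y_q)$ whose contribution is altered: the $(n_i-1)(n_i'-1)$ pairs built from items common to both configurations cancel, leaving only the $n_i'$ pairs with $p=a$ together with the $n_i-1$ pairs with $q=b$ in the old sum (and symmetrically, $p=b$ and $q=a$ in the new one). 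Since every $Q$-value lies in $[0,1]$, the double sum changes in absolute value by at most $n_i'+(n_i-1)=n-1$.

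Propagating this through the definition of $\hat{R}^{AUC}$ is routine but must be handled carefully: only user $i$'s term is affected, it carries the prefactor $\tfrac{1}{m}\tfrac{1}{n_i n_i'}$ and a Rademacher sign $\nu_i$ with $|\nu_i|=1$, and both $\sup_{Q}$ and $\mathbb{E}_\nu$ are non-expansive (one uses $|\sup_Q A(Q)-\sup_Q B(Q)|\leq\sup_Q|A(Q)-B(Q)|$ and then $|\mathbb{E}_\nu X-\mathbb{E}_\nu Y|\leq\mathbb{E}_\nu|X-Y|$), so neither enlarges the bound. The leading factor $2$ then yields $c_{ij}=\tfrac{2(n-1)}{m\,n_i n_i'}$, independently of $j$.

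Finally I would feed these constants into the second (lower-tail) inequality of Theorem \ref{thm:mcdiarmid} with $f=\hat{R}^{AUC}$ and $\mathbb{E}_S[f]=R^{AUC}$, obtaining $P\{R^{AUC}(\mathcal{Q})-\hat{R}^{AUC}(\mathcal{Q})\geq\epsilon\}\leq\exp\!\left(-2\epsilon^2/\sum_{i}n_i c_{i}^2\right)$. Using $\sum_{i}n_i c_i^2=\tfrac{4(n-1)^2}{m^2}\sum_i\tfrac{1}{n_i n_i'^2}$, setting the right-hand side equal to $\delta$ and solving for $\epsilon$ reproduces exactly the stated square-root term, which completes the proof.
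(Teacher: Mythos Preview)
Your proposal is correct and follows essentially the same route as the paper: both arguments apply McDiarmid's inequality to $f(S)=\hat{R}^{AUC}(\mathcal{Q})$, establish the bounded-differences constant $c_{ij}=\tfrac{2(n-1)}{m\,|\omega_i||\bar{\omega}_i|}$ by counting the $|\omega_i|+|\bar{\omega}_i|-1=n-1$ affected pairs when a single positive item is swapped, and then invert the tail bound to obtain the stated $\epsilon$. Your treatment of the $\sup_Q$ and $\mathbb{E}_\nu$ via non-expansiveness is in fact a bit more explicit than the paper's chain of inequalities, but the substance is identical.
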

\begin{proof}
 As well as $S$ consider another sample $\hat{S}_{ab} = S \setminus \{(w_{a}, y_{b})\} \cup  \{(w_{a}, y_{b'})\}$ for $a \in [1,m]$ and $b, b' \in [1, n]$. For notational convenience we call the nonzero elements of $\hat{S}_{ab}$, $\omega_i' = \omega_i$, $i \neq a$ and $\omega_a' = \omega_i \setminus \{y_b\} \cup \{y_{b'}\}$. Likewise for the zero elements $\bar{\omega}_i' = \bar{\omega}_i$, $i \neq a$ and $\bar{\omega}_a' = \bar{\omega}_i \setminus \{y_{b'}\} \cup \{y_{b}\}$. Define 
 \begin{displaymath} 
 f(S) =  2\mathbb{E}_{\nu} \left[\sup_{Q \in \mathcal{Q}} \frac{1}{m}\sum_{i=1}^m \frac{\nu_i}{|\omega_i||\bar{\omega}_i|} \sum_{p=1}^{|\omega_i|} \sum_{q=1}^{|\bar{\omega}_i|} Q(w_i, y_{\omega_{ip}}, y_{\bar{\omega}_{iq}}) \right].   
 \end{displaymath}
Therefore we have that $(m/2) \cdot (f(S) -  f(\hat{S}_{ab}))$ is equal to (the notation $\omega_{ij}$ denotes the $j$th element in $\omega_i$) 
\footnotesize
\begin{eqnarray*} 
 &=&   \mathbb{E}_{\nu} \left[\sup_{Q \in \mathcal{Q}}\sum_{i=1}^m \frac{\nu_i}{|\omega_i||\bar{\omega}_i|} \sum_{p=1}^{|\omega_i|} \sum_{q=1}^{|\bar{\omega}_i|} Q(w_i, y_{\omega_{ip}}, y_{\bar{\omega}_{iq}})  - \sup_{Q \in \mathcal{Q}} \sum_{i=1}^m \frac{\nu_i}{|\omega_i'||\bar{\omega}_i'|} \sum_{p=1}^{|\omega_i'|} \sum_{q=1}^{|\bar{\omega}_i'|} Q(w_i, y_{\omega_{ip}'}, y_{\bar{\omega}_{iq}'}) \right]  \\ 
 &\leq&  \mathbb{E}_{\nu} \left[\sup_{Q \in \mathcal{Q}} \left(\sum_{i=1}^m \frac{\nu_i}{|\omega_i||\bar{\omega}_i|} \sum_{p=1}^{|\omega_i|} \sum_{q=1}^{|\bar{\omega}_i|} Q(w_i, y_{\omega_{ip}}, y_{\bar{\omega}_{iq}})   -  \sum_{i=1}^m \frac{\nu_i}{|\omega_i'||\bar{\omega}_i'|} \sum_{p=1}^{|\omega_i'|} \sum_{q=1}^{|\bar{\omega}_i'|} Q(w_i, y_{\omega_{ip}'}, y_{\bar{\omega}_{iq}'})\right) \right] \\ 
  &=&  \mathbb{E}_{\nu} \left[\sup_{Q \in \mathcal{Q}} \left(\frac{\nu_a}{|\omega_a||\bar{\omega}_a|} \sum_{p=1}^{|\omega_a|} \sum_{q=1}^{|\bar{\omega}_a|} Q(w_a, y_{\omega_{ap}}, y_{\bar{\omega}_{aq}})   -   \frac{\nu_a}{|\omega_a'||\bar{\omega}_a'|} \sum_{p=1}^{|\omega_a'|} \sum_{q=1}^{|\bar{\omega}_a'|} Q(w_a, y_{\omega_{ap}'}, y_{\bar{\omega}_{aq}'})\right) \right] \\ 
    &=&  \mathbb{E}_{\nu} \left[\sup_{Q \in \mathcal{Q}} \left(\frac{\nu_a}{|\omega_a||\bar{\omega}_a|}  \left( \sum_{y_q \in \bar{\omega}_a \setminus \{y_{b'}\}} (Q(w_a, y_{b}, y_q) - Q(w_a, y_{b'}, y_q)) \right. \right. \right. \\  
   &&  \left. \left. \left. + \sum_{y_p \in \omega_a \setminus \{y_{b}\}} (Q(w_a, y_{p}, y_{b'}) - Q(w_a, y_p, y_b)) + Q(w_a, y_{b}, y_{b'}) - Q(w_a, y_{b'}, y_b) \right)   \right) \right] \\ 
&\leq& \frac{|\omega_a|+|\bar{\omega}_a|-1}{|\omega_a||\bar{\omega}_a|} \\ 
&=& \frac{n-1}{|\omega_a||\bar{\omega}_a|}.
\end{eqnarray*}
\normalsize
A similar derivation can be shown to bound $m(f(\hat{S}_{ab})- f(S))$ using an identical term. If we let $c_{ab} = \frac{2(n-1)}{m|\omega_a||\bar{\omega}_a|}$ then we can apply Theorem \ref{thm:mcdiarmid} and write $\mathbb{E}[f(S)] \leq f(S) + \epsilon$ with a probability greater than $1-\delta$ for some $\delta \in [0,1]$ where 
\begin{displaymath} 
\delta =  \exp\left(\frac{-2\epsilon^2}{\sum_{i=1}^m \sum_{j=1}^{n_i} c_{ij}^2}\right),
\end{displaymath}
and rearranging gives $\epsilon = \sqrt{\frac{2\ln(1/\delta) (n-1)^2}{m^2} \sum_{i=1}^{m} \frac{1}{|\omega_i||\bar{\omega}_i|^2}}$ as required.  
\end{proof}
To gain some insight into Theorem \ref{thm:rad} we study the cases for which the empirical Rademacher complexity is close to its expectation. In the first setting, $|\omega_i|$ is fixed and non-zero. Therefore the second term of the theorem is 
$
 \sqrt{\frac{2\ln(1/\delta) (n-1)^2}{m|\omega_i|(n-|\omega_i|)^2}},
$
which tends to zero as soon as the number of users $m$ tends to infinity. Similarly, if $|\omega_i|=\Theta(n)$ when $n$ tends to infinity, the second term of the theorem is $\Theta(1/\sqrt{mn})$ when $n$ and $m$ tends to infinity. Finally, when  $|\bar{\omega}_i|$ is fixed and non-zero, the second term grows as $\Theta(\sqrt{\frac{n}{m}})$ when $n$ and $m$ tend to infinity. In the last setting we require $m$ to grow faster than $n$ so that $n/m$ tends to zero to enforce the empirical Rademacher complexity to be close to its expectation.

We can now turn to concrete definitions of the function class $\mathcal{Q}$ relating to the matrix factorisation scenario we are interested in. Consider a specific form of the scoring function of the form $\mathcal{Q}_h = \{Q: (w_i, y_p, y_q) \mapsto h(\uv^T_i\vv_p - \uv^T_i\vv_{q}) \; | \; \|\Um\|_F \leq R, \|\Vm\|_F \leq R \}$ which allows us to analyse more deeply the empirical Rademacher complexity. Our goal is to bound the empirical Rademacher complexity based on the observations $S$. Before presenting a result we introduce two useful theorems.  

\begin{theorem}[\cite{meir2003generalization}] \label{thm:lipsch}
Let $\phi_1, \ldots, \phi_n$ be functions with respective Lipschitz constants $\gamma_1, \ldots, \gamma_n$, then the following bound holds: 
\begin{displaymath} 
\mathbb{E}_\sigma\left[ \sup_{f \in \mathcal{F}} \sum_{i=1}^n \sigma_i \phi_i(f(x_i)) \right]  \leq  \mathbb{E}_\sigma\left[ \sup_{f \in \mathcal{F}} \sum_{i=1}^n  \sigma_i \gamma_i f(x_i) \right],
\end{displaymath}
where $\mathcal{F}$ is a function class on $x \in \mathcal{X}$ and $\sigma_1, \ldots, \sigma_n$ are independent Rademacher variables. 

\end{theorem}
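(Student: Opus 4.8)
The plan is to establish this as an instance of the Ledoux--Talagrand contraction principle, which I would prove by induction on $n$, peeling off one coordinate at a time. The inductive claim is that for each index $j$ one may replace $\phi_j$ by the linear map $t \mapsto \gamma_j t$ without decreasing the quantity on the left, so that after $n$ swaps every $\phi_i$ has been exchanged for $\gamma_i \cdot \mathrm{id}$ and the right-hand side is reached. It therefore suffices to treat a single step: replacing $\phi_n$ by $\gamma_n \cdot \mathrm{id}$ while the other summands remain arbitrary.

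For that step I would condition on $\sigma_1, \ldots, \sigma_{n-1}$ and abbreviate $A(f) = \sum_{i<n} \sigma_i \phi_i(f(x_i))$. Averaging only over $\sigma_n \in \{-1,+1\}$ and splitting the two signs into two independent suprema gives
\begin{displaymath}
\mathbb{E}_{\sigma_n}\left[\sup_{f \in \mathcal{F}} \big(A(f) + \sigma_n \phi_n(f(x_n))\big)\right] = \frac{1}{2}\sup_{f, f' \in \mathcal{F}}\Big(A(f) + A(f') + \phi_n(f(x_n)) - \phi_n(f'(x_n))\Big).
\end{displaymath}
Since $A(f) + A(f')$ is symmetric under interchanging $f$ and $f'$ while the $\phi_n$-difference changes sign, the ordered supremum equals the supremum of $A(f) + A(f') + |\phi_n(f(x_n)) - \phi_n(f'(x_n))|$. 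The Lipschitz hypothesis then yields $|\phi_n(f(x_n)) - \phi_n(f'(x_n))| \le \gamma_n |f(x_n) - f'(x_n)|$, and undoing the absolute value by the same symmetry recovers
\begin{displaymath}
\frac{1}{2}\sup_{f, f' \in \mathcal{F}}\Big(A(f) + A(f') + \gamma_n\big(f(x_n) - f'(x_n)\big)\Big) = \mathbb{E}_{\sigma_n}\left[\sup_{f \in \mathcal{F}} \big(A(f) + \sigma_n \gamma_n f(x_n)\big)\right].
\end{displaymath}
Taking the remaining expectation over $\sigma_1, \ldots, \sigma_{n-1}$ completes one step, and iterating delivers the theorem.

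The step I expect to be the crux is this symmetrization, which lets the absolute value be inserted around the Lipschitz bound and then removed: one must argue carefully that, because $f$ and $f'$ range freely and symmetrically inside a single supremum, the sign of $f(x_n) - f'(x_n)$ can always be chosen to agree with that of the $\phi_n$-difference. It is worth noting that no normalisation such as $\phi_i(0) = 0$ is needed, since any additive constant in $\phi_n$ would contribute a term $\sigma_n \cdot \mathrm{const}$ that vanishes under $\mathbb{E}_{\sigma_n}$; this is why the stated constants $\gamma_i$ suffice. The only regularity point to verify is finiteness of the suprema (or restriction to a countable dense subclass), so that the conditioning and the interchange of the $\sigma_n$-average with the supremum are legitimate.
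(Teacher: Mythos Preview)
Your argument is correct: this is the classical Ledoux--Talagrand contraction proof, carried out cleanly by peeling off one coordinate at a time and using the $(f,f')$--symmetrisation to pass from $|\phi_n(f(x_n))-\phi_n(f'(x_n))|$ to $\gamma_n|f(x_n)-f'(x_n)|$. The paper itself does not supply a proof of this theorem at all --- it is quoted from \cite{meir2003generalization} and used as a black-box tool in the bound on $\hat{R}^{AUC}(\mathcal{Q}_h)$ --- so there is nothing to compare against beyond noting that what you have written is exactly the standard proof one finds in the Rademacher-complexity literature.
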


\begin{theorem}[\cite{eisenstat1995relative}]\label{thm:svdPert}
Let $\Bm \in \mathbb{R}^{m \times n}$ be a matrix with singular values $\sigma_1, \ldots, \sigma_r$ where $r$ is the rank of $\Am$, and $\tilde{\Bm} = \Dm_L\Bm\Dm_R$ be a multiplicative perturbation in which $\Dm_L$ and $\Dm_R$ are nonsingular matrices. The the following holds on the singular values $\tilde{\sigma}_1, \ldots, \tilde{\sigma}_r$ of $\tilde{\Bm}$: 
\begin{displaymath} 
 \frac{\sigma_i}{\|\Dm_L^{-1}\|_2\|\Dm_R^{-1}\|_2} \leq \tilde{\sigma_i} \leq \sigma_i \|\Dm_L\|_2\|\Dm_R\|_2, 
\end{displaymath}
where $\|\Am\|_2 = \sqrt{\lambda_{max}(\Am^T\Am)} = \sigma_{max}(\Am)$ is the spectral norm.  
\end{theorem}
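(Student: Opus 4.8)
The statement is a \emph{relative} (multiplicative) perturbation bound, so the plan is to avoid additive perturbation machinery entirely and instead reduce everything to a single submultiplicative inequality for singular values, namely $\sigma_i(\Am\Cm) \le \|\Am\|_2\,\sigma_i(\Cm)$ together with its transposed form $\sigma_i(\Am\Cm) \le \sigma_i(\Am)\,\|\Cm\|_2$. Granting this inequality, the two-sided bound follows by applying it twice to peel off $\Dm_L$ and $\Dm_R$ for the upper estimate, and then once more to the inverse representation $\Bm = \Dm_L^{-1}\tilde{\Bm}\Dm_R^{-1}$ for the lower estimate.

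First I would prove the submultiplicative inequality from the Courant--Fischer min--max characterisation
\begin{displaymath}
\sigma_i(\Cm) = \min_{\dim \mathcal{S} = n-i+1}\; \max_{0 \neq x \in \mathcal{S}} \frac{\|\Cm x\|_2}{\|x\|_2},
\end{displaymath}
where the minimum runs over subspaces $\mathcal{S}\subseteq\mathbb{R}^n$. Let $\mathcal{S}^\ast$ be the minimising subspace for $\sigma_i(\Cm)$. Since $\mathcal{S}^\ast$ is feasible in the min--max problem for $\Am\Cm$, and $\|\Am\Cm x\|_2 \le \|\Am\|_2\|\Cm x\|_2$ holds pointwise, we obtain $\sigma_i(\Am\Cm) \le \max_{0\neq x\in\mathcal{S}^\ast}\|\Am\Cm x\|_2/\|x\|_2 \le \|\Am\|_2\,\sigma_i(\Cm)$. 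The transposed form $\sigma_i(\Am\Cm)=\sigma_i(\Cm^T\Am^T)\le\|\Cm\|_2\,\sigma_i(\Am)$ is then immediate, because a matrix and its transpose share singular values and $\|\Cm^T\|_2=\|\Cm\|_2$.

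With this lemma in hand, the upper bound reads $\tilde{\sigma}_i = \sigma_i(\Dm_L(\Bm\Dm_R)) \le \|\Dm_L\|_2\,\sigma_i(\Bm\Dm_R) \le \|\Dm_L\|_2\|\Dm_R\|_2\,\sigma_i$, using the plain form to strip $\Dm_L$ and the transposed form to strip $\Dm_R$. For the lower bound, note that nonsingularity of $\Dm_L$ and $\Dm_R$ forces $\rank(\tilde{\Bm})=\rank(\Bm)=r$, so the indices $i \le r$ are in bijection on both sides and comparing the $i$th values is legitimate; writing $\Bm=\Dm_L^{-1}\tilde{\Bm}\Dm_R^{-1}$ and applying the upper bound to this product gives $\sigma_i \le \|\Dm_L^{-1}\|_2\|\Dm_R^{-1}\|_2\,\tilde{\sigma}_i$, which rearranges to the claimed left-hand inequality.

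The only genuinely delicate point is the correct handling of the min--max formula: one must fix the indexing convention $\sigma_1\ge\sigma_2\ge\cdots$, verify that the chosen subspace is feasible in the variational problem for the \emph{product} $\Am\Cm$ (not merely for $\Cm$), and confirm the index bijection between the singular values of $\Bm$ and $\tilde{\Bm}$ — which is exactly where nonsingularity of $\Dm_L,\Dm_R$ enters. Everything else is pure symmetry, namely the transpose trick for the mirror inequality and the inverse representation for the lower bound, so no independent argument is required beyond reusing the upper estimate.
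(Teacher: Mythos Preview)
Your argument is correct: the submultiplicative bound $\sigma_i(\Am\Cm)\le\|\Am\|_2\,\sigma_i(\Cm)$ follows cleanly from Courant--Fischer exactly as you describe, the transposed form is immediate, and peeling off $\Dm_L,\Dm_R$ (respectively $\Dm_L^{-1},\Dm_R^{-1}$ after inverting) gives both inequalities. The rank observation $\rank(\tilde{\Bm})=\rank(\Bm)$ is the only place nonsingularity is needed, and you identify it.

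There is, however, nothing in the paper to compare your proof against: Theorem~\ref{thm:svdPert} is simply quoted from \cite{eisenstat1995relative} without proof and used as a black box (specifically, with $\Dm_L=\Pim^{\nu}$ a signed permutation and $\Dm_R=\Imat$, so that $\|\Dm_L\|_2=\|\Dm_L^{-1}\|_2=1$ and the singular values are unchanged). Your write-up therefore supplies something the paper omits rather than duplicating or diverging from an existing argument; the route you take is the standard one and is essentially what appears in the cited reference.
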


We also introduce the following lemma whose utility will become apparently in the sequential theorem.

\begin{lemma}\label{lem:supTr}
Consider a matrix $\Am \in \mathbb{R}^{m \times n}$, then for a fixed integer $k$ the solutions $\Um \in \mathbb{R}^{m \times k}$ and $\Vm \in \mathbb{R}^{n \times k}$ of,
\begin{displaymath} 
\begin{array}{c c}
\sup & \tr(\Um^T \Am \Vm) \\  
\st & \|\Um\|_F = 1 \\ 
& \|\Vm\|_F = 1, 
\end{array}
\end{displaymath}
are given by $\Um = \frac{1}{\sqrt{k}}[\qv_1 \cdots  \qv_1]$ and $\Vm = \frac{1}{\sqrt{k}}[\pv_1 \cdots \pv_1]$ where $\pv_1$ and $\qv_1$ are the largest left and right singular vectors of $\Am$ respectively. The corresponding value of the objective is $\sigma_1$, the largest singular value of $\Am$. 
\end{lemma}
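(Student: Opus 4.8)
The plan is to collapse the matrix problem to a scalar inequality through the singular value decomposition together with Cauchy--Schwarz, and then to verify that the proposed pair $(\Um,\Vm)$ saturates that inequality. First I would denote by $\uv_l$ and $\vv_l$ the $l$th columns of $\Um$ and $\Vm$ respectively (a local convention for this proof, distinct from the row notation used earlier), so that the diagonal of $\Um^T\Am\Vm$ unpacks the objective as
\[
\tr(\Um^T\Am\Vm) = \sum_{l=1}^k \uv_l^T\Am\vv_l.
\]
The feasible set is a product of Frobenius unit spheres, hence compact, so the supremum is attained and it suffices to produce a matching upper bound and a point achieving it.

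Next I would bound each scalar term using the variational characterisation $\sigma_1 = \sup_{\|\uv\|_2=\|\vv\|_2=1}\uv^T\Am\vv$, which gives $\uv_l^T\Am\vv_l \leq \sigma_1\,\|\uv_l\|_2\|\vv_l\|_2$ for every $l$ (the case of a zero column being trivial). Summing over $l$ and applying Cauchy--Schwarz to the resulting sum of products of norms yields
\[
\sum_{l=1}^k \uv_l^T\Am\vv_l \;\leq\; \sigma_1\sum_{l=1}^k\|\uv_l\|_2\|\vv_l\|_2 \;\leq\; \sigma_1\Big(\sum_{l=1}^k\|\uv_l\|_2^2\Big)^{1/2}\Big(\sum_{l=1}^k\|\vv_l\|_2^2\Big)^{1/2} = \sigma_1\,\|\Um\|_F\|\Vm\|_F,
\]
which equals $\sigma_1$ under the constraints $\|\Um\|_F=\|\Vm\|_F=1$. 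This establishes $\tr(\Um^T\Am\Vm)\leq\sigma_1$ over the whole feasible set.

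Finally I would check attainment. For $\Um=\frac{1}{\sqrt{k}}[\qv_1\cdots\qv_1]$ and $\Vm=\frac{1}{\sqrt{k}}[\pv_1\cdots\pv_1]$ each column equals $\frac{1}{\sqrt{k}}\qv_1$ (resp.\ $\frac{1}{\sqrt{k}}\pv_1$), so $\|\Um\|_F^2 = k\cdot\frac{1}{k}\|\qv_1\|_2^2 = 1$ and likewise for $\Vm$, confirming feasibility. Using $\Am\pv_1=\sigma_1\qv_1$ and $\|\qv_1\|_2=1$, the objective evaluates to $\sum_{l=1}^k\frac{1}{k}\qv_1^T\Am\pv_1 = \qv_1^T\Am\pv_1 = \sigma_1$, matching the upper bound, so the pair is optimal with value $\sigma_1$.

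The estimates themselves are immediate once the SVD is invoked, so I do not expect a genuine analytic obstacle. The only point needing care is the equality analysis behind the claim that the optimiser is \emph{exactly} of the stated form: the chain above is tight precisely when each $\uv_l$ is aligned with $\qv_1$, each $\vv_l$ with $\pv_1$, and the Cauchy--Schwarz step holds with equality (proportional norm sequences). I would therefore phrase the uniqueness statement carefully, noting the inevitable sign ambiguity ($\qv_1,\pv_1\mapsto-\qv_1,-\pv_1$) and the additional degeneracy that arises when $\sigma_1$ is a repeated singular value; the existence and the optimal value $\sigma_1$, however, follow directly from the two displays above.
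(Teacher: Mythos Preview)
Your argument is correct and takes a genuinely different route from the paper. The paper proceeds via Lagrange multipliers: it forms the Lagrangian $\phi(\Um,\Vm)=\tr(\Um^T\Am\Vm)-\tfrac{1}{2}\lambda_1(\tr(\Um^T\Um)-1)-\tfrac{1}{2}\lambda_2(\tr(\Vm^T\Vm)-1)$, derives the stationarity conditions $\Am\Vm=\lambda\Um$ and $\Am^T\Um=\lambda\Vm$, eliminates to obtain $\Am^T\Am\Vm=\lambda^2\Vm$, and then argues that the columns must be copies of the top singular vector. You instead unpack the trace as $\sum_l \uv_l^T\Am\vv_l$, apply the variational characterisation of $\sigma_1$ termwise, and finish with Cauchy--Schwarz on the column norms before checking attainment.

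Your approach is more self-contained: it delivers the global upper bound $\sigma_1$ directly, so optimality of the exhibited pair is immediate once feasibility and the value $\sigma_1$ are checked. The paper's first-order argument identifies critical points but then has to argue separately that the top singular direction (rather than any other) is the one to pick, and its step from $\Am^T\Am\Vm=\lambda^2\Vm$ to ``columns of $\Vm$ must be composed of a particular eigenvector'' tacitly assumes a simple top eigenvalue. Your equality analysis is also more careful: you correctly flag the sign ambiguity and the degeneracy when $\sigma_1$ is repeated, whereas the paper's uniqueness claim is stated without that caveat. The Lagrangian route has the merit of mechanically producing the candidate optimiser, but for a lemma whose only downstream use is the scalar bound $\tr(\Um^T\Am\Vm)\le\sigma_1$ under the Frobenius constraints, your direct inequality is the cleaner tool.
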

\begin{proof} 
We introduce $\phi(\Um, \Vm) =  \tr(\Um^T \Am \Vm) - \frac{1}{2}\lambda_1 (\tr(\Um^T\Um) - 1) - \frac{1}{2}\lambda_2 (\tr(\Vm^T\Vm) - 1)$ where $\lambda_1$ and $\lambda_2$ are Lagrange multipliers. Taking derivatives, one obtains the following equations: 
\begin{eqnarray} 
&&\Am \Vm  = \lambda_1 \Um \\ 
&&\Am^T \Um  = \lambda_2 \Vm, 
\end{eqnarray}
where $\lambda_1$ and $\lambda_2$ are Lagrange multipliers. By premultiplying the first equality by $\Um^T$ and the second by $\Vm$ then taking the trace one can see that $\lambda_1 = \lambda_2 = \lambda$ is the objective value. By substitution of $\Um = \lambda^{-1} \Am \Vm$ into the second equality we obtain $\Am^T\Am\Vm = \lambda^2 \Vm$ which implies that columns of $\Vm$ must be composed of a particular eigenvector of $\Am^T\Am$. In a similar manner, one obtains $\Am\Am^T\Um = \lambda^2 \Um$. Denote the left and right singular values of $\Am$ as $\pv_1, \ldots, \pv_r$ and $\qv_1, \ldots, \qv_r$, where $r$ is the rank of $\Am$, with corresponding singular values $\sigma_1 \geq \sigma_2 \geq \ldots \geq \sigma_r$, then it is clear that $\Um = a_1[\qv_1 \cdots  \qv_1]$, $\Vm = a_2[\pv_1 \cdots \pv_1]$ and $\lambda = \sigma_1$ for some constants $a_1$ and $a_2$. To find the scaling factors $a_1$ and $a_2$ we can write $\tr(\Um^T\Um) = a_1k^2 = 1$ and $\tr(\Vm^T\Vm) = a_2k^2 = 1$ which implies $a_1 = a_2 = 1/\sqrt{k}$.  
\end{proof}

These theorems allow us to make the following statement concerning the empirical Rademacher complexity under a matrix factorisation setting. 

\begin{theorem} 
Consider a sample $S = \cup_{i=1}^m \{(w_i, y_{\omega_{i1}}), \ldots, (w_i, y_{\omega_{in_i}})\}$ drawn from distribution $\mathcal{D}_1^{n_1} \times \cdots \times \mathcal{D}_m^{n_m}$. Define scoring functions of the form $\mathcal{Q}_h = \{Q: (w_i, y_p, y_q) \mapsto h(\uv^T_i\vv_p - \uv^T_i\vv_{q}) \; | \; \Um \in \mathbb{R}^{m \times k}, \Vm \in \mathbb{R}^{n \times k} \|\Um\|_F \leq R_\Um , \|\Vm\|_F \leq R_\Vm \}$ where $h$ is Lipschitz with constant $B$ and $k$ is a fixed integer. Then the following bound holds on the empirical Rademacher complexity
\begin{displaymath} 
  \hat{R}^{AUC}(\mathcal{Q}_h) \leq \frac{2B R_\Um R_\Vm}{m} \|\Em - \bar{\Em} \|_2, 
\end{displaymath}
where $\Sigmam$ is a diagonal matrix of the first $k$ singular values of $(\Em - \bar{\Em})$ with $\Em = (\diag(\Xm\jv)^{-1}\Xm)^T$, $\bar{\Em} = (\diag(\bar{\Xm}\jv)^{-1}\bar{\Xm})^T$ and $\bar{\Xm} = \Jm - \Xm$ where $\diag(\cdot)$ is a diagonal matrix of its vector input. 
\end{theorem}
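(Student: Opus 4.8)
The plan is to strip off the Lipschitz map $h$ by a contraction inequality, turn the remaining \emph{linear} objective into a trace, and then read the supremum off as the top singular value of $\Em-\bar{\Em}$ via Lemma \ref{lem:supTr}. Throughout write $n_i=|\omega_i|$, $n_i'=|\bar{\omega}_i|$ (assumed non-zero) and set $\Mm=\Em-\bar{\Em}$. First I would substitute $Q(w_i,y_p,y_q)=h(\uv_i^T\vv_p-\uv_i^T\vv_q)$ into the definition of $\hat{R}^{AUC}(\mathcal{Q}_h)$ and invoke the contraction bound of Theorem \ref{thm:lipsch}. Since $h$ is $B$-Lipschitz, this replaces each $h$ by its argument at the cost of a factor $B$, giving
\begin{displaymath}
\hat{R}^{AUC}(\mathcal{Q}_h)\;\le\;\frac{2B}{m}\,\mathbb{E}_{\nu}\left[\sup_{\Um,\Vm}\sum_{i=1}^m \frac{\nu_i}{n_i n_i'}\sum_{p\in\omega_i}\sum_{q\in\bar{\omega}_i}\left(\uv_i^T\vv_p-\uv_i^T\vv_q\right)\right],
\end{displaymath}
where the supremum is taken over $\|\Um\|_F\le R_\Um$ and $\|\Vm\|_F\le R_\Vm$.

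Next I would collapse the inner double sum. Because $\Em$ (resp. $\bar{\Em}$) is the transpose of $\Xm$ (resp. $\bar{\Xm}=\Jm-\Xm$) with each row rescaled by its sum $n_i$ (resp. $n_i'$), the $i$th column of $\Vm^T\Em$ equals $\tfrac{1}{n_i}\sum_{p\in\omega_i}\vv_p$ and that of $\Vm^T\bar{\Em}$ equals $\tfrac{1}{n_i'}\sum_{q\in\bar{\omega}_i}\vv_q$. Hence the bracketed term is $\uv_i^T\Vm^T\Mm_{\cdot i}$, the $(i,i)$ entry of $\Um\Vm^T\Mm$, where $\Mm_{\cdot i}$ is the $i$th column of $\Mm$. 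Writing $\Dm_\nu=\diag(\nu)$ this lets me express the whole user sum as a single trace,
\begin{displaymath}
\sum_{i=1}^m \nu_i\,\uv_i^T\Vm^T\Mm_{\cdot i}=\tr\!\left(\Dm_\nu\Um\Vm^T\Mm\right)=\tr\!\left(\Vm^T(\Mm\Dm_\nu)\Um\right).
\end{displaymath}

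Finally I would evaluate the supremum. Rescaling $\Um,\Vm$ to unit Frobenius norm and applying Lemma \ref{lem:supTr} with the roles of $\Um$ and $\Vm$ interchanged and $\Am=\Mm\Dm_\nu$ gives $\sup_{\Um,\Vm}\tr(\Vm^T(\Mm\Dm_\nu)\Um)=R_\Um R_\Vm\,\sigma_1(\Mm\Dm_\nu)$. As $\Dm_\nu$ is a diagonal sign matrix, it is orthogonal, so $\sigma_1(\Mm\Dm_\nu)=\|\Mm\Dm_\nu\|_2=\|\Mm\|_2=\|\Em-\bar{\Em}\|_2$; crucially this value is independent of $\nu$, so the expectation over the Rademacher variables is trivial, and collecting the factor $2B/m$ yields the stated bound. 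Note that although $\Sigmam$ gathers the first $k$ singular values, Lemma \ref{lem:supTr} forces the optimiser to stack copies of the leading singular vector, so only $\sigma_1$ survives; Theorem \ref{thm:svdPert} is therefore not needed for this statement.

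I expect the contraction step to be the main obstacle. The Rademacher average carries a single variable $\nu_i$ per \emph{user}, yet this one variable multiplies an entire pairwise average $\tfrac{1}{n_i n_i'}\sum_{p,q}h(\cdot)$ of Lipschitz terms rather than a single composition $\phi_i(f(x_i))$ of the form assumed in Theorem \ref{thm:lipsch}. The reduction must therefore be carried out at the level of the user index, treating the inner pairwise average as the object on which the $B$-Lipschitz contraction acts; making this precise, and verifying it is compatible with the per-user symmetrisation underlying the McDiarmid argument of Theorem \ref{thm:rad}, is the delicate part. Once this is granted, the remaining steps are routine linear algebra built on Lemma \ref{lem:supTr}.
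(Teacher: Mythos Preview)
Your proposal follows essentially the same path as the paper's proof: contraction via Theorem \ref{thm:lipsch}, collapsing the pairwise sum into the trace $\tr(\Dm_\nu \Um\Vm^T(\Em-\bar{\Em}))$, and then Lemma \ref{lem:supTr} to read off the top singular value. The only difference is the final step: the paper invokes Theorem \ref{thm:svdPert} to conclude that the singular values of $(\Em-\bar{\Em})\Dm_\nu$ match those of $\Em-\bar{\Em}$, whereas you observe directly that $\Dm_\nu$ is orthogonal. Your route is cleaner here; Theorem \ref{thm:svdPert} with $\|\Dm_\nu\|_2=\|\Dm_\nu^{-1}\|_2=1$ yields the same conclusion but is heavier machinery than needed.

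Your caveat about the contraction step is well taken, and the paper does not address it either: it simply writes ``the second line results from an application of Theorem \ref{thm:lipsch}'' without further comment. As you point out, each $\nu_i$ multiplies an \emph{average} $\frac{1}{n_in_i'}\sum_{p,q}h(\uv_i^T\vv_p-\uv_i^T\vv_q)$ of Lipschitz compositions, which is not of the single-composition form $\phi_i(f(x_i))$ assumed in Theorem \ref{thm:lipsch}; nor is this average a $B$-Lipschitz function of the scalar $\uv_i^T(\dot\vv_i-\ddot\vv_i)$ that appears after the claimed contraction. So the subtlety you flag is genuine, and the paper's proof shares it rather than resolving it.
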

\begin{proof}
Consider the following
\begin{eqnarray*} 
\hat{R}^{AUC}(\mathcal{Q}) &=& 2\mathbb{E}_{\nu} \left[\sup_{Q \in \mathcal{Q}} \frac{1}{m}\sum_{i=1}^m \frac{\nu_i}{n_i n_i'} \sum_{p=1}^{n_i} \sum_{q=1}^{n_i'} h(\uv_i^T\vv_p - \uv_i^T\vv_q) \right]   \\ 
  &\leq& \frac{2B}{m} \mathbb{E}_{\nu}\left[\sup_{\|\Um\|_F \leq R_\Um, \|\Vm\|_F \leq R_\Vm} \sum_{i=1}^m \frac{\nu_i}{n_i n_i'} \sum_{p \in \omega_i} \sum_{q \in \bar{\omega}_i} (\uv_i^T\vv_p - \uv_i^T\vv_q)  \right] \\ 
 &=& \frac{2B}{m} \mathbb{E}_{\nu}\left[\sup_{\|\Um\|_F \leq R_\Um, \|\Vm\|_F \leq R_\Vm} \sum_{i=1}^m \nu_i (\uv_i^T \dot{\vv}_i - \uv_i^T\ddot{\vv}_i)  \right]  \\ 
  &=& \frac{2B}{m} \mathbb{E}_{\nu}\left[\sup_{\|\Um\|_F \leq R_\Um, \|\Vm\|_F \leq R_\Vm} \tr(\Pim^{\nu}\Um\Vm^T\Em - \Pim^{\nu}\Um\Vm^T\bar{\Em})  \right]  \\ 
    &\leq& \frac{2BR_\Um R_\Vm}{m} \mathbb{E}_{\nu}\left[ \| (\Em - \bar{\Em})\Pim^{\nu} \|_2  \right]  \\ 
&\leq& \frac{2B R_\Um R_\Vm}{m} \|\Em - \bar{\Em} \|_2
\end{eqnarray*}
where $\Pim_{ii}^{\nu} = \nu_i$ for all $i$ and off-diagonal elements are zero. The second line results from an application of Theorem  \ref{thm:lipsch}. The fourth line is a matrix representation of the sum in the $\sup$ term and in the fifth line we use Lemma \ref{lem:supTr}. For the final line, note that the diagonal elements of $\Pim^{\nu}$ are selected from $\{-1, +1\}$ and so $\|\Pim^{\nu} \|_2 = 1$ and an application of Theorem \ref{thm:svdPert} shows the singular values do not change, giving the required result.  
\end{proof}

Notice that this bound does not depend on the dimensionality of the factor matrices $\Um$ and $\Vm$. Here we see the motivation for penalising Optimisation \ref{eqn:obj} by the norm of the weight matrices: doing so keeps the Rademacher complexity low. We would also benefit from choosing loss functions whose Lipschitz constant is small. Putting the parts together allows us to note the following. 

\begin{corollary}
Consider a sample $S = \cup_{i=1}^m \{(w_i, y_{\omega_{i1}}), \ldots, (w_i, y_{\omega_{in_i}})\}$ drawn from distribution $\mathcal{D}_1^{n_1} \times \cdots \times \mathcal{D}_m^{n_m}$. Using the notations defined above the following bound holds on the Rademacher complexity of $\mathcal{Q}_h$, with probability greater than $1-\delta$,
\begin{displaymath} 
 R^{AUC}(\mathcal{Q}_h) \leq \frac{2B R_\Um R_\Vm}{m} \|\Em - \bar{\Em} \|_2 +  \sqrt{\frac{2\ln(1/\delta)(n-1)^2}{m^2} \sum_{i=1}^{m} \frac{1}{|\omega_i||\bar{\omega}_i|^2}  }. 
\end{displaymath}
\end{corollary}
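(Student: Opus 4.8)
The plan is to obtain the corollary by chaining the two results immediately preceding it: Theorem \ref{thm:rad}, which bounds the true Rademacher complexity $R^{AUC}(\mathcal{Q})$ by its empirical counterpart $\hat{R}^{AUC}(\mathcal{Q})$ plus a concentration term, and the preceding matrix-factorisation theorem, which gives a deterministic upper bound on $\hat{R}^{AUC}(\mathcal{Q}_h)$ in terms of $\|\Em - \bar{\Em}\|_2$. Since both statements have already been established, no genuinely new estimate is required; the work lies in verifying that the hypotheses of each line up and that the probabilistic quantifiers combine correctly.

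First I would instantiate Theorem \ref{thm:rad} with the particular function class $\mathcal{Q} = \mathcal{Q}_h$, which yields, with probability at least $1-\delta$,
\begin{displaymath}
R^{AUC}(\mathcal{Q}_h) \leq \hat{R}^{AUC}(\mathcal{Q}_h) + \sqrt{\frac{2\ln(1/\delta)(n-1)^2}{m^2}\sum_{i=1}^m \frac{1}{|\omega_i||\bar{\omega}_i|^2}}.
\end{displaymath}
Next I would substitute the bound $\hat{R}^{AUC}(\mathcal{Q}_h) \leq \frac{2BR_\Um R_\Vm}{m}\|\Em - \bar{\Em}\|_2$ from the matrix-factorisation theorem. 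Crucially, this latter bound is deterministic: it holds for every draw of $S$, so inserting it into the right-hand side above consumes no additional failure probability and the resulting inequality continues to hold with probability at least $1-\delta$. This produces exactly the claimed bound.

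The one point requiring care, and the step I expect to be the main obstacle, is a compatibility check between the two hypotheses. Theorem \ref{thm:rad} is stated for a function class mapping into $[0,1]$, and its bounded-differences constant $c_{ab}$ was derived by controlling oscillations of $Q$ under single-coordinate substitutions, each difference being at most one precisely because $Q \in [0,1]$. The matrix-factorisation theorem, by contrast, only assumes that $h$ is Lipschitz with constant $B$. To apply the two results in tandem I would therefore restrict attention to surrogate losses $h$ whose outputs lie in $[0,1]$ (consistent with the losses under study once suitably scaled), so that $\mathcal{Q}_h$ genuinely satisfies the range requirement of Theorem \ref{thm:rad}. Beyond securing this condition, the argument is a mechanical composition of the two displayed inequalities.
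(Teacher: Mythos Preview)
Your proposal is correct and matches the paper's approach exactly: the corollary is stated without proof in the paper, being an immediate combination of Theorem~\ref{thm:rad} with the deterministic bound on $\hat{R}^{AUC}(\mathcal{Q}_h)$ from the preceding theorem. Your observation about the $[0,1]$-range compatibility is a legitimate technical caveat that the paper glosses over, so if anything your treatment is more careful than the original.
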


We now want to make the connection between the expectation of the AUC and the Rademacher complexity. To do so we introduce the following lemma. 

\begin{lemma}\label{lem:sup1} 
Let $\mathcal{Q}$ be a function class mapping from  $\mathcal{W}\times\mathcal{Y}^2$ to $[0, 1]$. Then with probability at least $1-\delta$ over all samples $S$ drawn from $\mathcal{D}$, the following holds: 
\begin{displaymath} 
 \sup_{Q \in \mathcal{Q}} (\mathbb{E}_\mathcal{D}[Q] - \hat{\mathbb{E}}_S[Q]) \leq  \mathbb{E}_{S \sim \mathcal{D}}[ \sup_{Q \in \mathcal{Q}}(\mathbb{E}_\mathcal{D}[Q] - \hat{\mathbb{E}}_S[Q])]  +   \sqrt{\frac{\ln(1/\delta)}{2m^2} \sum_{i=1}^{m} \frac{1}{|\omega_i|} \left(\frac{n-1}{|\bar{\omega}_i|}\right)^2 }.
\end{displaymath}
\end{lemma}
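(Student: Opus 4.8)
The plan is to apply McDiarmid's inequality (Theorem~\ref{thm:mcdiarmid}) to the function
\begin{displaymath}
f(S) = \sup_{Q \in \mathcal{Q}} \left(\mathbb{E}_\mathcal{D}[Q] - \hat{\mathbb{E}}_S[Q]\right),
\end{displaymath}
viewing the sample $S = \cup_{i=1}^m \{(w_i, y_{\omega_{i1}}), \ldots, (w_i, y_{\omega_{in_i}})\}$ as a collection of $\sum_{i=1}^m |\omega_i|$ independent random variables, one for each observed relevant item $(w_i, y_{\omega_{ij}})$. Since $\mathbb{E}_\mathcal{D}[Q]$ does not depend on $S$, the first (upper-tail) statement of McDiarmid's inequality applied to $f$ will directly give the claimed high-probability bound, provided we can pin down the bounded-differences constants $c_{ij}$.

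First I would establish the bounded-differences property. Following exactly the construction in the proof of Theorem~\ref{thm:rad}, I would replace a single observed relevant item $(w_a, y_b)$ by $(w_a, y_{b'})$ to obtain the perturbed sample $\hat{S}_{ab}$; this swaps $y_b$ out of $\omega_a$ and $y_{b'}$ in, and correspondingly moves $y_{b'}$ out of $\bar{\omega}_a$ and $y_b$ in, so that the cardinalities $|\omega_a|$ and $|\bar{\omega}_a|$ --- and hence all normalising factors in $\hat{\mathbb{E}}_S[Q]$ --- are left unchanged. Using $\sup_Q A_Q - \sup_Q B_Q \leq \sup_Q (A_Q - B_Q)$, the difference $f(S) - f(\hat{S}_{ab})$ is bounded by $\sup_{Q\in\mathcal{Q}}(\hat{\mathbb{E}}_{\hat{S}_{ab}}[Q] - \hat{\mathbb{E}}_S[Q])$, and only the $i=a$ block of the empirical average contributes. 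This is precisely the quantity expanded in the proof of Theorem~\ref{thm:rad}: the swap changes exactly $(|\bar{\omega}_a|-1) + (|\omega_a|-1) + 1 = n-1$ pairwise terms $Q(w_a, \cdot, \cdot)$, each a difference of two values in $[0,1]$ and hence bounded in absolute value by one. Therefore
\begin{displaymath}
|f(S) - f(\hat{S}_{ab})| \leq \frac{n-1}{m\,|\omega_a|\,|\bar{\omega}_a|} =: c_{aj},
\end{displaymath}
which I can quote directly from the earlier derivation rather than recomputing.

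With these constants in hand, the final step is purely arithmetic. Since there are $|\omega_i|$ coordinates $j$ associated with user $i$, each carrying the same constant $c_{ij} = (n-1)/(m|\omega_i||\bar{\omega}_i|)$, one has
\begin{displaymath}
\sum_{i=1}^m \sum_{j=1}^{|\omega_i|} c_{ij}^2 = \sum_{i=1}^m |\omega_i| \cdot \frac{(n-1)^2}{m^2|\omega_i|^2|\bar{\omega}_i|^2} = \frac{1}{m^2}\sum_{i=1}^m \frac{1}{|\omega_i|}\left(\frac{n-1}{|\bar{\omega}_i|}\right)^2.
\end{displaymath}
Applying the first inequality of Theorem~\ref{thm:mcdiarmid} gives $f(S) - \mathbb{E}_S[f(S)] < \epsilon$ --- equivalently $f(S) \leq \mathbb{E}_S[f(S)] + \epsilon$ --- with probability at least $1-\delta$ where $\delta = \exp(-2\epsilon^2 / \sum_{ij} c_{ij}^2)$; solving for $\epsilon$ yields exactly the stated square-root term.

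I would expect the only real obstacle to be the bounded-differences step --- specifically verifying that swapping one observation perturbs exactly $n-1$ of the pairwise loss terms and that the normalising denominators are left invariant. But since this counting is identical to the one already carried out in the proof of Theorem~\ref{thm:rad}, the argument reduces to reusing that bound and then performing the routine McDiarmid bookkeeping.
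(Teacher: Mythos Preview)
Your proposal is correct and follows essentially the same route as the paper: define $f(S)=\sup_{Q}(\mathbb{E}_\mathcal{D}[Q]-\hat{\mathbb{E}}_S[Q])$, reuse the swap argument from the proof of Theorem~\ref{thm:rad} to obtain the bounded-differences constant $c_{ab}=(n-1)/(m|\omega_a||\bar{\omega}_a|)$, and then apply McDiarmid's inequality and solve for $\epsilon$. Your write-up is in fact a bit more explicit than the paper's about the pair-counting and the summation $\sum_{i,j}c_{ij}^2$, but the argument is the same.
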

\begin{proof}
We can use a similar proof technique to that of Theorem \ref{thm:rad}  to write 
\begin{displaymath} 
| \hat{\mathbb{E}}_S[Q] - \hat{\mathbb{E}}_{\hat{S}_{ab}}[Q] | \leq \frac{n-1}{m|\omega_a||\bar{\omega}_a|},
\end{displaymath}
Noting that the left hand side can be written as $|(\mathbb{E}[Q] -  \hat{\mathbb{E}}_{\hat{S}_{ab}}[Q]) - (\mathbb{E}[Q] - \hat{\mathbb{E}}_S[Q]) |$ and taking the supremum over $Q$ allows us to bound $|\sup_{Q \in \mathcal{Q}} (\mathbb{E}[Q] - \hat{\mathbb{E}}_S[Q])  - \sup_{Q \in \mathcal{Q}} (\mathbb{E}[Q] - \hat{\mathbb{E}}_{\hat{S}_{ab}}[Q])|$: 
\begin{eqnarray*} 
 &\leq&  |\sup_{Q \in \mathcal{Q}} (\mathbb{E}[Q] - \hat{\mathbb{E}}_S[Q])  - (\mathbb{E}[Q] - \hat{\mathbb{E}}_{\hat{S}_{ab}}[Q])| \\ 
&\leq& \frac{n-1}{m|\omega_a||\bar{\omega}_a|},
\end{eqnarray*}
Define the following function 
\begin{displaymath} 
 f'(S) = \sup_{Q \in \mathcal{Q}} (\mathbb{E}[Q] - \hat{\mathbb{E}}_S[Q]), 
\end{displaymath}
then the above bound can be used in conjunction with McDiarmid's theorem and we can say 
\begin{displaymath} 
P_\mathcal{D}(f'(S) - \mathbb{E}[f'] > \epsilon ) \leq \exp\left(\frac{-2\epsilon^2}{\sum_{i=1}^m \sum_{j=1}^{n_i} c_{ij}^2}\right),  
\end{displaymath}
where $c_{ab} = \frac{n-1}{m|\omega_a||\bar{\omega}_a|}$. If we equate the right side of the above to $\delta$ we have $\epsilon = \sqrt{\frac{\ln(1/\delta)}{2m^2} \sum_{i=1}^{m} \frac{1}{|\omega_i|} \left(\frac{n-1}{|\bar{\omega}_i|}\right)^2 }$. 
\end{proof}

Finally we show how the empirical expectations of $Q \in \mathcal{Q}$ found on different samples are related to the Rademacher complexity in the following lemma.

\begin{lemma}\label{lem:sup2}
Let $S$ and $\tilde{S}$ be sampled from the distribution $\mathcal{D}$ and consider a sequence of Rademacher variables $\nu_1, \ldots, \nu_m$, then the following statement is true 
\begin{displaymath}
\mathbb{E}_{S, \tilde{S}} \sup_{Q \in \mathcal{Q}} [\hat{\mathbb{E}}_{\tilde{S}}[Q] - \hat{\mathbb{E}}_{S}[Q]] \leq R^{AUC}(\mathcal{Q}).
\end{displaymath}
\end{lemma}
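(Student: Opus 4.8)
The plan is to apply the classical symmetrisation argument underlying Rademacher complexity bounds, adapted to the block structure induced by the users. First I would abbreviate the per-user contribution by writing, for a sample $S$,
\[
Z_i(Q, S) = \frac{1}{n_i n_i'}\sum_{p \in \omega_i}\sum_{q \in \bar{\omega}_i} Q(w_i, y_p, y_q),
\]
so that $\hat{\mathbb{E}}_S[Q] = \frac{1}{m}\sum_{i=1}^m Z_i(Q, S)$ by Equation \ref{eqn:expQ}, and hence
\[
\hat{\mathbb{E}}_{\tilde{S}}[Q] - \hat{\mathbb{E}}_S[Q] = \frac{1}{m}\sum_{i=1}^m \bigl(Z_i(Q, \tilde{S}) - Z_i(Q, S)\bigr).
\]

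The key step is symmetrisation. Since $S$ and $\tilde{S}$ are drawn independently from the same product distribution $\mathcal{D}_1^{n_1}\times\cdots\times\mathcal{D}_m^{n_m}$, the block of observations belonging to user $i$ in $S$ and the corresponding block in $\tilde{S}$ are independent and identically distributed. Swapping these two blocks therefore leaves the joint law of $(S, \tilde{S})$ unchanged, while it negates the term $Z_i(Q,\tilde{S}) - Z_i(Q,S)$. Performing this swap independently for each $i$ according to a sign $\nu_i \in \{-1,+1\}$ and averaging over all sign patterns shows that, for independent Rademacher variables $\nu_1,\ldots,\nu_m$,
\[
\mathbb{E}_{S,\tilde{S}}\sup_{Q \in \mathcal{Q}} \frac{1}{m}\sum_{i=1}^m \bigl(Z_i(Q,\tilde{S})-Z_i(Q,S)\bigr) = \mathbb{E}_{S,\tilde{S},\nu}\sup_{Q \in \mathcal{Q}} \frac{1}{m}\sum_{i=1}^m \nu_i\bigl(Z_i(Q,\tilde{S})-Z_i(Q,S)\bigr).
\]

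Then I would split the supremum of the difference using subadditivity, $\sup_Q (A - B) \leq \sup_Q A + \sup_Q(-B)$, to obtain two terms. Using that $-\nu_i$ has the same distribution as $\nu_i$, and that $S$ and $\tilde{S}$ are identically distributed, each of the two resulting terms equals
\[
\mathbb{E}_{S,\nu}\sup_{Q \in \mathcal{Q}}\frac{1}{m}\sum_{i=1}^m \nu_i Z_i(Q,S).
\]
By the definition of $\hat{R}^{AUC}(\mathcal{Q})$, which carries a factor of $2$, together with $R^{AUC}(\mathcal{Q}) = \mathbb{E}_S[\hat{R}^{AUC}(\mathcal{Q})]$, the sum of these two identical terms is exactly $R^{AUC}(\mathcal{Q})$, which gives the claim.

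I expect the main obstacle to be rigorously justifying the symmetrisation identity: one must verify that exchanging the entire user-$i$ block between $S$ and $\tilde{S}$ (rather than individual items) is the correct measure-preserving operation, since the summand $Z_i$ couples all of user $i$'s relevant and irrelevant items through the double sum over $\omega_i$ and $\bar{\omega}_i$. The fact that independence holds at the level of whole per-user samples, through the product structure $\mathcal{D}_i^{n_i}$, is precisely what makes the block swap valid, and this is the point I would state most carefully.
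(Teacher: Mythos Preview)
Your proposal is correct and follows essentially the same symmetrisation argument as the paper: introduce Rademacher signs via the block-swap invariance of the joint law of $(S,\tilde S)$, split the supremum by subadditivity, and use the sign symmetry together with the identical distribution of $S$ and $\tilde S$ to collapse the two terms into $R^{AUC}(\mathcal{Q})$. Your exposition is in fact somewhat cleaner, in particular your explicit identification of the per-user block swap as the measure-preserving operation justifying the symmetrisation step.
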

\begin{proof} 
We start by showing the following is equivalent to $\sup_{Q \in \mathcal{Q}} [\hat{\mathbb{E}}_{\tilde{S}}[Q] -  \hat{\mathbb{E}}_{S}[Q]] $: 
\begin{eqnarray*} 
 \mathbb{E}_{S, \tilde{S} \nu} \sup_{Q \in \mathcal{Q}} \left[\frac{1}{m}\sum_{i=1}^m \frac{1}{n_i n_i'} \sum_{p=1}^{n_i} \sum_{q=1}^{n_i'} (\nu_i Q(w_i, y_{\omega_ip}, y_{\bar{\omega}_iq})  - \nu_i Q(w_i, y_{\omega_ip}, y_{\bar{\omega}_iq}) ) \right], 
\end{eqnarray*}
since when $\nu_i = -1$ for all $i$ the two expressions are clearly the same, and $\nu_i = 1$ swaps the observations for the $i$th user in $S$ with the corresponding ones in $\tilde{S}$. Since both $S$ and $\tilde{S}$ are sampled from the the same distribution the expectation of the supremum over these sets is the same. Note also that above term is upper bounded by the following: 
\begin{eqnarray*}   
 \sup_{Q \in \mathcal{Q}} [\hat{\mathbb{E}}_{\tilde{S}}[Q] -  \hat{\mathbb{E}}_{S}[Q]]  &\leq& \mathbb{E}_{S, \tilde{S} \nu} \sup_{Q \in \mathcal{Q}} \left[\frac{1}{m}\sum_{i=1}^m \frac{1}{n_i n_i'} \sum_{p=1}^{n_i} \sum_{q=1}^{n_i'} \nu_i Q(w_i, y_{\omega_ip}, y_{\bar{\omega}_iq} ) \right]  \\
 &&+  \mathbb{E}_{S, \tilde{S} \nu} \sup_{Q \in \mathcal{Q}} \left[-\frac{1}{m}\sum_{i=1}^m \frac{1}{n_i n_i'} \sum_{p=1}^{n_i} \sum_{q=1}^{n_i'} \nu_i Q(w_i, y_{\omega_ip}, y_{\bar{\omega}_iq}) \right]  \\
 &=& 2\mathbb{E}_{S, \tilde{S} \nu} \sup_{Q \in \mathcal{Q}} \left[\frac{1}{m}\sum_{i=1}^m \frac{1}{n_i n_i'} \sum_{p=1}^{n_i} \sum_{q=1}^{n_i'} \nu_i Q(w_i, y_{\omega_ip}, y_{\bar{\omega}_iq}) \right]  \\ 
 &=& R^{AUC}(\mathcal{Q}),
\end{eqnarray*}
where the second line comes from the symmetry of distribution of the Rademacher variables. 
\end{proof}
We can now bound the expectation of the loss using the Rademacher complexity. 
\begin{theorem} 
Let $S$ and $\tilde{S}$ be sampled from the distribution $\mathcal{D}$ and let $\mathcal{Q}$ be a function class mapping from  $\mathcal{W}\times\mathcal{Y}^2$ to $[0, 1]$. The with probability at least $1-\delta$, the following holds:  
\begin{displaymath} 
\mathbb{E}_\mathcal{D}[Q] \leq  \hat{\mathbb{E}}_S[Q] + R^{AUC}(\mathcal{Q})  +   \sqrt{\frac{\ln(1/\delta)(n-1)^2}{2m^2} \sum_{i=1}^{m} \frac{1}{|\omega_i||\bar{\omega}_i|^2}  }.
\end{displaymath}
\end{theorem}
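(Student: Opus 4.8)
The plan is to assemble the two preceding lemmas via the standard Rademacher symmetrisation argument, with the probabilistic content coming entirely from Lemma~\ref{lem:sup1} and the symmetrisation content from Lemma~\ref{lem:sup2}. First I would note that for any fixed $Q \in \mathcal{Q}$ we trivially have $\mathbb{E}_\mathcal{D}[Q] - \hat{\mathbb{E}}_S[Q] \leq \sup_{Q \in \mathcal{Q}}(\mathbb{E}_\mathcal{D}[Q] - \hat{\mathbb{E}}_S[Q])$, so it suffices to control the supremum on the right. Lemma~\ref{lem:sup1} already does this, giving with probability at least $1-\delta$
\begin{displaymath}
\sup_{Q \in \mathcal{Q}}(\mathbb{E}_\mathcal{D}[Q] - \hat{\mathbb{E}}_S[Q]) \leq \mathbb{E}_{S \sim \mathcal{D}}\left[\sup_{Q \in \mathcal{Q}}(\mathbb{E}_\mathcal{D}[Q] - \hat{\mathbb{E}}_S[Q])\right] + \sqrt{\frac{\ln(1/\delta)}{2m^2} \sum_{i=1}^{m} \frac{1}{|\omega_i|} \left(\frac{n-1}{|\bar{\omega}_i|}\right)^2}.
\end{displaymath}

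Next I would bound the expected supremum by a two-sample (ghost-sample) quantity. The key observation is that $\hat{\mathbb{E}}_{\tilde{S}}[Q]$ is an unbiased estimator of $\mathbb{E}_\mathcal{D}[Q]$, so that $\mathbb{E}_\mathcal{D}[Q] = \mathbb{E}_{\tilde{S}}[\hat{\mathbb{E}}_{\tilde{S}}[Q]]$ for every $Q$. Substituting this identity inside the supremum, noting that $\hat{\mathbb{E}}_S[Q]$ does not depend on $\tilde{S}$, and then pulling the expectation over $\tilde{S}$ outside the supremum by Jensen's inequality yields
\begin{displaymath}
\mathbb{E}_{S}\left[\sup_{Q \in \mathcal{Q}}(\mathbb{E}_\mathcal{D}[Q] - \hat{\mathbb{E}}_S[Q])\right] \leq \mathbb{E}_{S, \tilde{S}}\left[\sup_{Q \in \mathcal{Q}}(\hat{\mathbb{E}}_{\tilde{S}}[Q] - \hat{\mathbb{E}}_S[Q])\right],
\end{displaymath}
and the right-hand side is precisely the quantity bounded in Lemma~\ref{lem:sup2}, namely $R^{AUC}(\mathcal{Q})$.

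Finally I would collect the pieces and match the concentration term. Combining the three displays gives, with probability at least $1-\delta$,
\begin{displaymath}
\mathbb{E}_\mathcal{D}[Q] - \hat{\mathbb{E}}_S[Q] \leq R^{AUC}(\mathcal{Q}) + \sqrt{\frac{\ln(1/\delta)}{2m^2} \sum_{i=1}^{m} \frac{1}{|\omega_i|} \left(\frac{n-1}{|\bar{\omega}_i|}\right)^2}.
\end{displaymath}
The stated form then follows by rearranging and using the algebraic identity $\frac{1}{|\omega_i|}(\frac{n-1}{|\bar{\omega}_i|})^2 = \frac{(n-1)^2}{|\omega_i||\bar{\omega}_i|^2}$ to pull the factor $(n-1)^2$ outside the sum inside the square root.

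The proof is essentially a mechanical assembly once the two lemmas are in hand; the only genuinely substantive step is the symmetrisation, that is the unbiasedness-plus-Jensen argument connecting the single-sample expected supremum to the ghost-sample expected supremum. I expect this to be the main (though mild) obstacle, since one must be careful that the ghost sample $\tilde{S}$ is drawn from the same product distribution $\mathcal{D}_1^{n_1} \times \cdots \times \mathcal{D}_m^{n_m}$ as $S$ so that $\hat{\mathbb{E}}_{\tilde{S}}[Q]$ is genuinely unbiased for $\mathbb{E}_\mathcal{D}[Q]$; all remaining steps are direct applications of the stated lemmas and elementary rearrangement.
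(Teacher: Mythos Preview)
Your proposal is correct and follows the same approach as the paper, which simply states that the result ``follows directly from Lemmas~\ref{lem:sup1} and~\ref{lem:sup2}.'' In fact you supply the one connecting step the paper leaves implicit, namely the ghost-sample/Jensen argument bounding $\mathbb{E}_{S}[\sup_{Q}(\mathbb{E}_\mathcal{D}[Q]-\hat{\mathbb{E}}_S[Q])]$ by $\mathbb{E}_{S,\tilde{S}}[\sup_{Q}(\hat{\mathbb{E}}_{\tilde{S}}[Q]-\hat{\mathbb{E}}_S[Q])]$, so your write-up is a faithful and slightly more detailed rendering of the paper's intended argument.
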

\begin{proof} 
The result follows directly from Lemmas \ref{lem:sup1} and \ref{lem:sup2}. 
\end{proof}

Given this result we can examine our loss functions in terms of the bound on the expectation. A key advantage of the bound is it is data-dependent and hence we can estimate the model complexity if we replace the Rademacher complexity term with the bound on its empirical estimate. We have already discussed when this quantity is small as well as a similar analysis for the final term. In practice one finds that the bound on the Rademacher term is larger than the last term hence once must be focus on this term. When studying the loss functions of Section \ref{sec:general} one can see that the logistic and sigmoid functions are Lipschitz with constant 1. If we say that the norms of $\uv_i$ and $\vv_i$ are bounded by $D$ then we have $-2D^2 \leq \gamma_{i,p,q} \leq 2D^2$. The hinge loss is given by $\frac{1}{2}\max(0, 1-x)^2$ which implies $D = 1/\sqrt{4}$ so that the loss term is in the range $[0, 1]$ and the Lipschitz constant is $3/2$.

\section{Related Work}

In this section we will briefly review some works on finding low rank matrix factorisations for implicit rating matrices. An early attempt to deal with implicit ratings using a squared loss is presented in \cite{hu2008collaborative,pan2008one}. The formulation is an adaptation of the Singular Value Decomposition (SVD), and minimizes
\begin{displaymath} 
\min \sum_{i=1}^m\sum_{j=1}^n \Cm_{ij} (\uv_i^T\vv_j - 1)^2 + \lambda(\|\Um\|_F^2 + \|\Vm\|_F^2),  
\end{displaymath}
where $\Cm$ is a matrix of weights for user-item pairs and $\lambda$ is a regularisation parameter. In \cite{pan2008one} the user-item values are set to 1 for positive items and lower constants for the rest. A related problem is that of \emph{matrix completion} \cite{candes2009exact,mazumder2010spectral} in which one minimises the Frobenius norm difference between real and predicted ratings using a trace norm penalisation (the sum of the singular values, denoted $\|\cdot\|_*$),
\begin{displaymath} \label{eqn:opt2}
 \min \frac{1}{2} \sum_{i,j \in \omega} (\Xm_{ij} - \Zm_{ij})^2  + \lambda \|\Zm\|_*,
\end{displaymath}
\noindent 
where $\lambda$ is a user-defined regularisation parameter and $\Zm$ is the matrix factorisation. Notice that only nonzero elements of $\Xm$ are considered in the cost. A key strength of matrix completion is the strong theoretical guarantees on finding unknown entries with high accuracy.  The disadvantage of both these approaches is that they do not take into account the orderings of the items. 

In \cite{rendle2009bpr} the authors study AUC maximisation for matrix factorisation in the context of recommendation, the primary motivation for which is a maximum posterior estimate for a Bayesian framing of the problem. The connection to AUC maximisation is made by replacing the indicator function in its computation with the log sigmoid denoted by $\ln \sigma(x) = \ln(1/(1+e^{-x}))$. Solutions are obtained using a stochastic gradient descent algorithm based on bootstrap sampling. The particular optimisation considered takes the form
\begin{equation*} 
\begin{split}
\max & \sum_{i=1}^m\sum_{p \in \omega_i}\sum_{q \in \bar{\omega}_i} \log \sigma(\uv_i^T\vv_p - \uv_i^T\vv_q) - \frac{\lambda_\Um }{2}\|\Um\|_F^2\\
&-  \sum_i \left(\frac{\lambda_{\Vm_1}}{2}\sum_{p \in \omega_i} \Vm_{ip}^2 + \frac{\lambda_{\Vm_0}}{2}\sum_{q \in \bar{\omega}_i} \Vm_{iq}^2\right),
\end{split}
\end{equation*}
where $\lambda_{\Um}, \lambda_{\Vm_1}, \lambda_{\Vm_0}$ are regularisation constants for the user factors, positive items and negative items respectively. The first term is a log-sigmoid unnormalised relaxation of the AUC criterion and the remaining terms are used for regularisation. Note that one optimises over the full list as opposed to prioritising the top few. Furthermore, our framework specialises to BPR in logistical loss case and when the regularisation parameters above are equivalent. 

Another paper which considers the AUC is \cite{weston2013learning} however it departs from other papers in using an item factor model of the form $\Zm_{ij} = \frac{1}{|\omega_i|} \sum_{p \in \omega_i} \vv_p^T\vv_j$ which is the score for the $i$th user and $j$th item. The disadvantage of the item modelling approach is that it does not model users separately. Indeed, the connection between the user-item factor approach can be seen by setting $\uv_i = \frac{1}{|\omega_i|} \sum_{p \in \omega_i} \vv_p$. In the optimisation, the AUC is approximated by the hinge loss,
\begin{displaymath} 
\min \sum_{i=1}^m\sum_{p\in \omega_i}\sum_{q\in \bar{\omega}_i} \max(0, 1 - \Zm_{ip} + \Zm_{iq}),
\end{displaymath}
and one applies stochastic gradient descent using one user, one positive and one negative item chosen at random at each gradient step. As noted by the authors, this loss does not prioritise the top of the list and hence they propose another loss  
\begin{displaymath} 
\min \sum_{i=1}^m\sum_{p\in \omega_i}\sum_{q\in \bar{\omega}_i} \Phi\left(\frac{|\bar{\omega}_i|}{N}\right)\max(0, 1 - \Zm_{ip} + \Zm_{iq}),  
\end{displaymath}
where $\Phi(x) = \sum_{i=1}^x 1/x$ and $N$ is a sampled approximation of $\sum_{q \in \bar{\omega_i}} \mathcal{I}(\Zm_{ip} \leq 1 - \Zm_{iq})$ i.e. the rank of $p$th item for $i$th user. Additionally, the algorithm samples ranks of positive items in non-uniform ways in order to prioritise the top of the list. 

A related approach based on Mean Reciprocal Rank (MRR) \cite{shi2012climf} increases the importance of top $k$-ranked items in conjunction with maximising a lower bound on the smoothed MRR of the list. In words, the MRR is the average of the reciprocal of the rank of the first correct prediction for each user. The authors use a sigmoid function to approximate the indicator and after relaxation of a lower bound of the reciprocal rank, the following function is maximised 
\begin{equation*}
\sum_{ij} \Xm_{ij} \left(\ln\sigma(\Zm_{ij}) +\sum_{k=1}^n\ln(1- \Xm_{ik} \sigma(\Zm_{ik} - \Zm_{ij})) \right) - \frac{\lambda}{2}(\|\Um\|^2_F + \|\Vm\|_F^2),
\end{equation*}
where $\lambda$ is a regularisation constant. The first term in the sum promotes elements in the factor model that correspond to relevant items and the second term degrades the relevance scores of the irrelevant items relative to relevant item $y_j$. 

\section{Computational Experiments}\label{sec:exp} 

In this section we analyse various properties of MFAUC empirically in order to understand its behaviour and ranking performance. We consider the question of how well MFAUC optimises the AUC on the training set under specific loss functions and weightings. Another important question is whether the parallel optimisation procedure can speed up convergence of the objective. A final consideration is the evaluation of our ranking framework and other matrix factorisation methods when considering results at the very top of the list for each user. For this comparison we benchmark against other user-item matrix factorisation methods including Soft Impute \cite{mazumder2010spectral} and Weighted Regularised Matrix Factorisation (WRMF, \cite{hu2008collaborative}). Notice that the choice of the logistic loss function and identity weighting implies a comparison to BPR. All experimental code is written in Python with critical sections written in Cython and C++.  

We make use of the following synthetic datasets. The first, \texttt{Synthetic1}, is generated in the following manner. Two random matrices $\Um^* \in \mathbb{R}^{500 \times 8}$ and $\Vm^* \in \mathbb{R}^{200 \times 8}$ are constructed such that respectively their columns are orthogonal. We then compute the partially observed matrix $\hat{\Xm}_{ij} = \mathcal{I}(\uv_i^T\vv_j > Q(s, 1-t))$ where $Q(s, 1-t_i)$ represents the quantile corresponding to the top $t_i$ items. Here $t_i = .1$ so that there are on average 20 nonzero elements per row of $\hat{\Xm}$, and we additionally add an average of 5 random relevant ratings per row to form our final rating matrix $\Xm$. For the second dataset, \texttt{Synthetic2}, we generate $\Um$ and $\Vm$ in a similar way, however this time the probability of observing a rating (relevant/irrelevant) is distributed according to the power law for each item/user with exponent 1. We iteratively sample observations according to this distribution of users and items, setting $\Xm_{ij}$ to be 1 if $\Zm_{ij} \geq \hat{\mathbb{E}}[\Zm]$, $\Zm = \Um\Vm^T$, otherwise it is zero. We continue this process until the density of the matrix is at least $.1$. The final matrix is of size $573 \times 300$ with $17796$ non-zeros. 

\subsection{ROC Analysis of Losses}

First we analyse the maximisation of AUC under the loss functions we outlined in Section \ref{sec:general} using the synthetic datasets.  The MFAUC algorithm is set up with $k=8$ using $\kappa_{\mathcal{W}} = 30$ row samples and $\kappa_{\mathcal{Y}} = 10$ column samples to compute approximate derivatives for $\uv_i$ and $\vv_i$. The initial values of $\Um$ and $\Vm$ are set to zero mean Gaussian random values with a standard deviation of .1, the regularisation constant $\lambda = 0$, maximum iterations $T=500$ and item distribution exponent $\tau = 0$. The learning rate is  $\alpha = .05$ and we choose $\beta \in \{.5, 1.0, 2.0\}$ for sigmoid and logistic losses and $\rho \in \{.5, 1.0, 2.0\}$ for the $\tanh$ item weighting. To make the problem harder we remove 5 items for each user and then train using the remaining items, recording the ROC curve at the end of the procedure. This process is repeated 5 times, averaging results. Since we are interested in the top items of the list, we consider items until a false positive rate of 20\% is encountered. For clarity we only include the best ROC curves for logistic, sigmoid and tanh-based losses, defined as the largest true positive rate at 20\% false positives.  

\begin{figure}[ht]
\centering

\subfigure[\texttt{Synthetic1}]{
   \includegraphics[width=.45\linewidth] {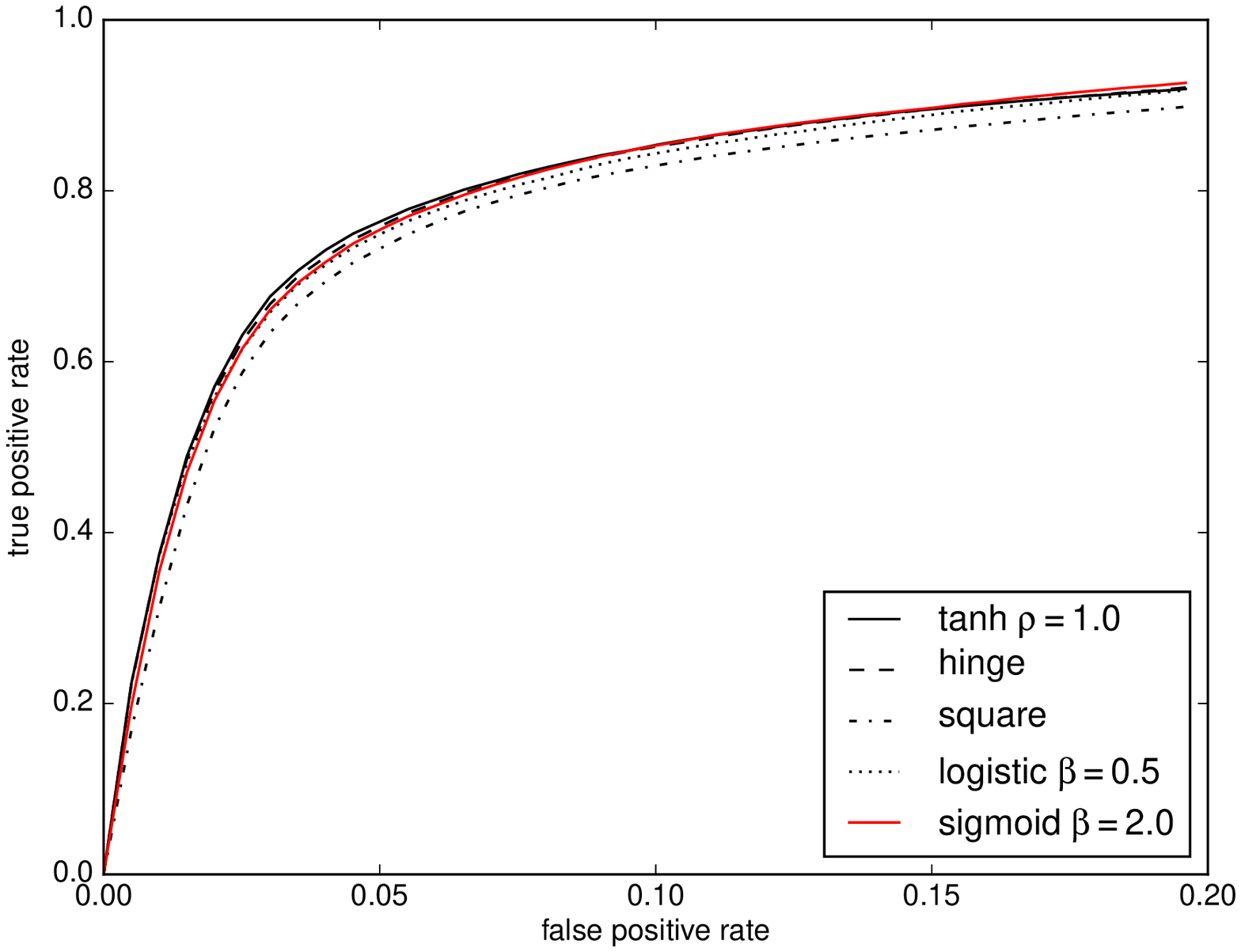}
 }
 \subfigure[\texttt{Synthetic2}]{
   \includegraphics[width=.45\linewidth] {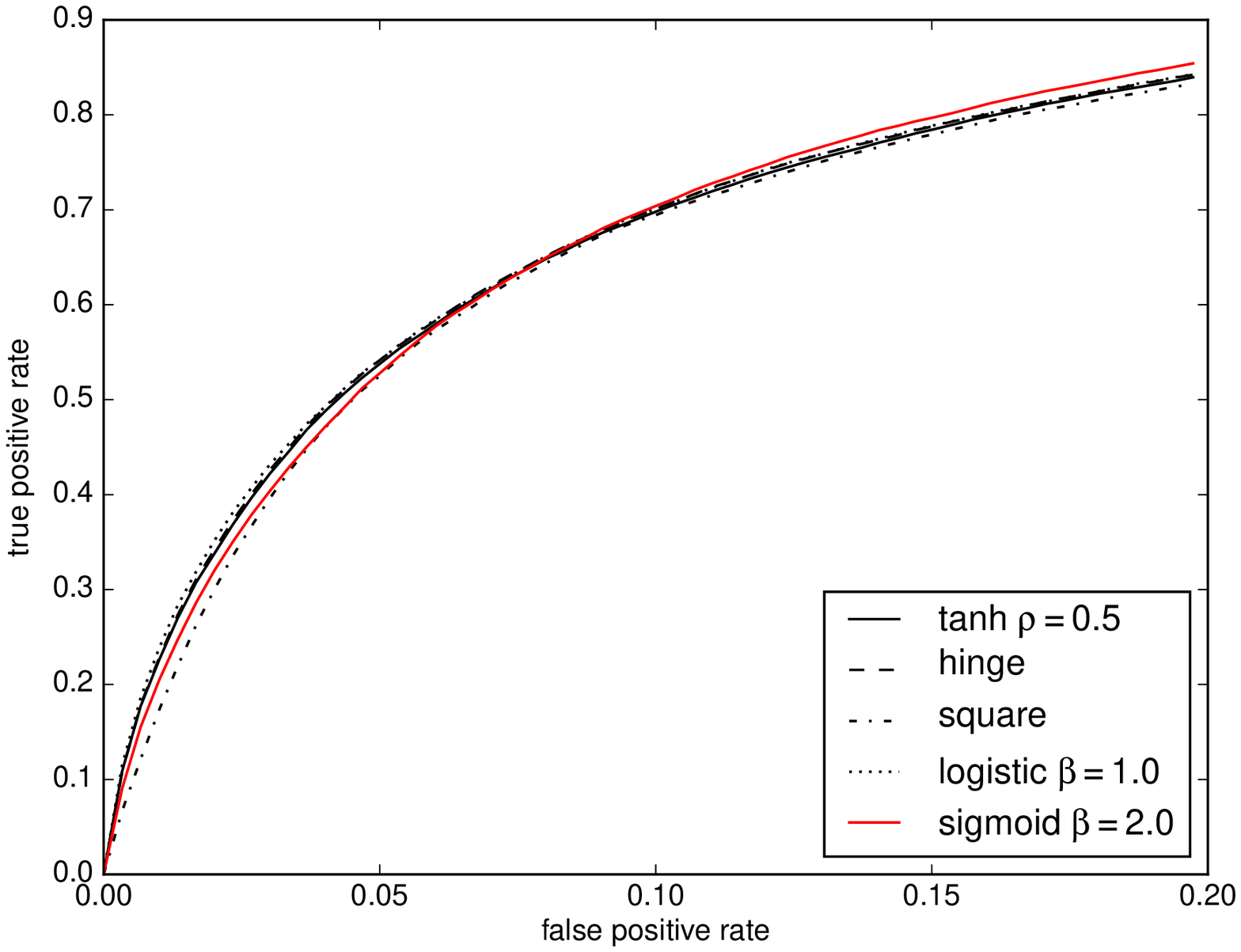}
 }

\caption{Left side of ROC curves for the synthetic datasets using different loss functions.}\label{fig:roc}
\end{figure}

The ROC plots are presented in Figure \ref{fig:roc} and on the whole we see that differences are slight for both datasets. On \texttt{Synthetic1} the $\tanh$ weighting function is slightly preferential at the start of the curve relative to the other losses, followed by hinge, sigmoid and then the logistic loss. The square loss provides the worst of the results as it penalises only the difference in the scores for positive and negative items, however one does gain a speed advantage as outlined in Section \ref{subsec:opt}. We can see from the curves on \texttt{Synthetic2} that this is a more challenging problem. We observe the logistic loss providing the fewest errors at the start of the ranking followed closely by hinge and then the $\tanh$ loss. The sigmoid and square loss curves are poor in this case, the latter for the reason we have already mentioned. Performing gradient descent over the sigmoid loss function can be challenging for the reasons we mentioned in Section \ref{sec:general}.    

\subsection{Optimisation Strategy} 

We now examine the parallel optimisation proposed in Section \ref{subsec:opt} in terms of convergence and timing. The MFAUC algorithm is set up with $k=8$ using $\kappa_{\mathcal{U}} = 30$ row samples and $\kappa_{\mathcal{Y}} = 10$ column samples to compute approximate derivatives for $\uv_i$ and $\vv_i$. The initial values of $\Um$ and $\Vm$ are set to random Gaussian values, then we fix learning rate $\alpha = .05$, regularisation $\lambda = .1$, maximum iterations $T=300$, and use the hinge loss. To see how the parallelisation affects convergence and timings, we record these values for the parallel optimisation with 2, 4, and 8 processes, as well as the standard SGD approach, repeating experiments 5 times to get average quantities for each parameter set. The experiment is run on an Intel core i7-3740QM CPU with 8 cores and 16 GB of RAM and we record the objective value every 5 iterations.  

\begin{figure}[ht]
\centering%
\subfigure[\texttt{Synthetic1}]{
   \includegraphics[width=.45\linewidth] {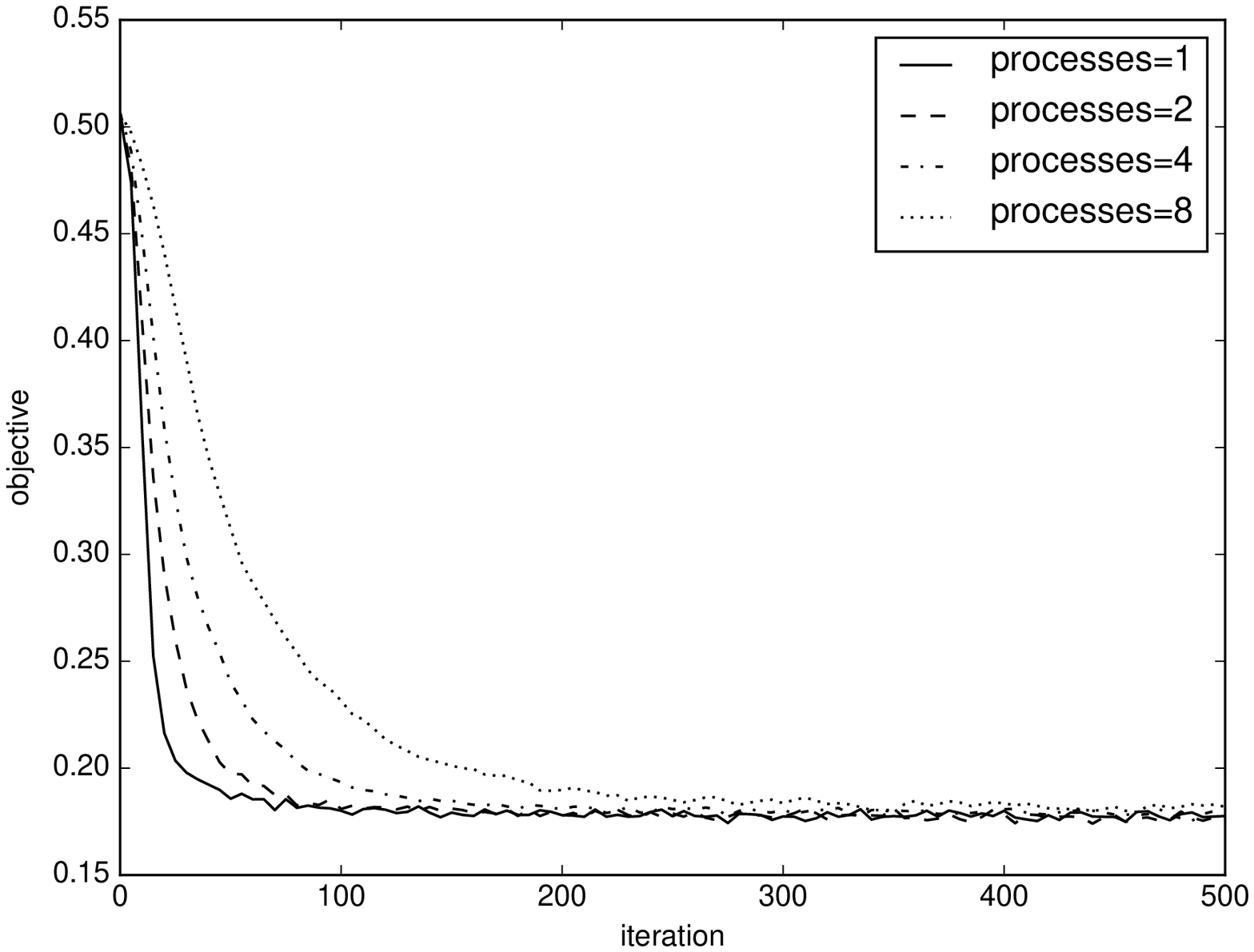}
 }
 \subfigure[\texttt{Synthetic2}]{
   \includegraphics[width=.45\linewidth] {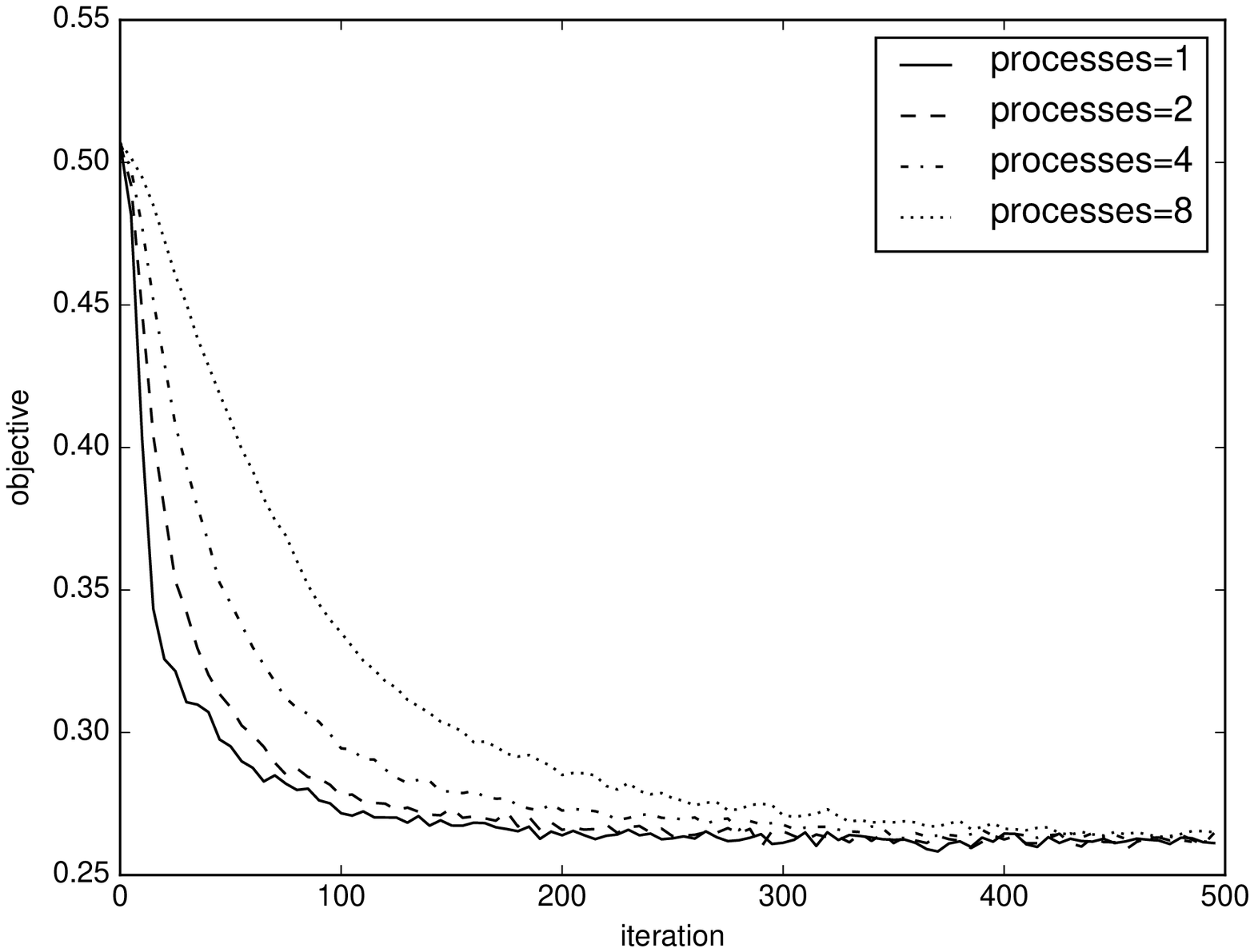}
 }

\caption{Plots showing the objective function on the synthetic datasets with different optimisation routines.}\label{fig:timings} 
\end{figure}

\begin{table}
\centering
\begin{tabular}{|l |l l l l|} 
\hline
 & 1 & 2 & 4 & 8 \\ 
\hline
\texttt{Synthetic1} & 607.1 & 309.1 & 163.6 & 106.9 \\ 
\texttt{Synthetic2} & 689.2 & 332.8 & 201.9 & 16.3 \\ 
\hline
\end{tabular}
\caption{Timings (in seconds) of the optimisation routines with the synthetic datasets by number of processes.}\label{tab:timings}
\end{table}

Figure \ref{fig:timings} compares the objective values and Table \ref{tab:timings} shows the timings of the parallel and non-parallel variants of SGD on both datasets.  We observe approximate speedups of 3 times with 8 processes whilst converging to approximately the same objective values for both datasets. When we look at the objective against the iteration number, the parallel SGD converges slower than the non-parallel version particularly when using 8 cores. One of the reasons for this decrease in convergence rate is that the dataset is split into smaller blocks when more processes are applied. Overall however, parallel SGD matches the objective of SGD at a few smaller time cost and we naturally expect this improvement factor to increase with higher core CPUs. 

\subsection{Comparative Ranking Performance} 

Here we concern ourselves with how well the different matrix factorisation methods can rank items in a top-$\ell$ recommendation task. From each user, 5 randomly selected relevant elements are removed and then a prediction is made for the top $\ell$ items. In this case,  $\ell \in \{1, 3, 5\}$ and the average values of the precision, recall and AUC are recorded. 

As an initial step in the learning process we perform model selection on the training nonzero elements using 3-fold cross validation. For MFAUC the parameters are identical to the setup used above and we select learning rates from $\alpha \in \{2^{-1}, \ldots, 2^{-8}\}$ and $\lambda \in \{2^{0}, \ldots, 2^{-10}\}$. The initial factor matrices are computed using the randomised SVD. The F1 measure is recorded on validation items in conjunction with MFAUC and used to pick the best solution. We take a sample of 3 validation items from the users to form this validation set.  With Soft Impute, we use regularisation parameters $\lambda \in \{1.0, .8,\ldots, 0\}$ and the randomised SVD to update solutions as proposed in \cite{dhanjal14online}. The regularisation parameters for WRMF are chosen from $\lambda \in \{2^{1}, 2^{-1}, \ldots, 2^{-11}\}$.  To select the best parameters we use the maximum F1 scores on the test items averaged over all folds, fixing $k=8$ as this is the dimension used to generate the data. Once we have found the optimal parameters we train using the training observations and test on the remaining elements to get estimates of precision, recall and AUC. The training is repeated 5 times with different random seeds and the resulting evaluation metrics are averaged. 

\begin{table*} 
\centering
\begin{tabular}{|l|l |l l l | l l l | l |} 
\cline{3-9}
\multicolumn{1}{c}{}&& p@1 & p@3 & p@5 & r@1 & r@3 & r@5 &  AUC\\
\hline
\multirow{10}{*}{\rotatebox[origin=c]{90}{\texttt{Syn1}}}
 & SoftImpute		 & .831 & .696 & .549 & .166 & .417 & .549 & .914\\
 & WRMF		 & \textbf{.893} & .754 & .610 & \textbf{.179} & .452 & .610 & \textbf{.924}\\
 & MFAUC hinge          	 & .855 & .759 & .606 & .171 & .455 & .606 & \textbf{.924}\\
 & MFAUC square         	 & .863 & .755 & .607 & .173 & .453 & .607 & .922\\
 & MFAUC sigmoid        	 & .867 & .772 & \textbf{.626} & .173 & .463 & \textbf{.626} & .923\\
 & MFAUC logistic       	 & .872 & .762 & .614 & .174 & .457 & .614 & \textbf{.924}\\
 & MFAUC tanh $\rho=.5$	 & .854 & .752 & .602 & .171 & .451 & .602 & .923\\
 & MFAUC tanh $\rho=1.0$	 & .866 & .760 & .615 & .173 & .456 & .615 & \textbf{.924}\\
 & MFAUC tanh $\rho=2.0$	 & .874 & \textbf{.775} & .618 & .175 & \textbf{.465} & .618 & .921\\
 & MFAUC tanh $\rho=5.0$	 & .880 & .764 & .604 & .176 & .459 & .604 & .920\\
\hline
\multirow{10}{*}{\rotatebox[origin=c]{90}{\texttt{Syn2}}}
 & SoftImpute		 & .225 & .193 & .172 & .045 & .116 & .172 & .766\\
 & WRMF		 & .431 & \textbf{.297} & \textbf{.241} & .086 & \textbf{.178} & \textbf{.241} & \textbf{.817}\\
 & MFAUC hinge          	 & .428 & .294 & .240 & .086 & .176 & .240 & .796\\
 & MFAUC square         	 & .422 & .290 & .236 & .084 & .174 & .236 & .801\\
 & MFAUC sigmoid        	 & .416 & .284 & .228 & .083 & .170 & .228 & .795\\
 & MFAUC logistic       	 & \textbf{.444} & .295 & .238 & \textbf{.089} & .177 & .238 & .800\\
 & MFAUC tanh $\rho=.5$	 & .415 & .284 & .232 & .083 & .171 & .232 & .791\\
 & MFAUC tanh $\rho=1.0$	 & .420 & .284 & .230 & .084 & .170 & .230 & .790\\
 & MFAUC tanh $\rho=2.0$	 & .395 & .285 & .231 & .079 & .171 & .231 & .797\\
 & MFAUC tanh $\rho=5.0$	 & .362 & .259 & .214 & .072 & .156 & .214 & .776\\
\hline
\end{tabular}
\caption{Test errors on the synthetic datasets. Top represents \texttt{Synthetic1} and bottom is \texttt{Synthetic2}\label{tab:syntheticRes1}. Best results are in bold. }
\end{table*}

Table \ref{tab:syntheticRes1} shows the performance of the matrix factorisation methods. On both datasets, MFAUC and WRMF perform better than Soft Impute and particularly on \texttt{Synthetic2}. One explanation is that WRMF and MLAUC do not make assumptions about the distributions of the relevant items like Soft Impute. On the harder \texttt{Synthetic2} dataset WRMF gives the best overall results closely followed by logistic and hinge loss MLAUC. 

\subsubsection{Real Datasets}

Next we consider a set of real world datasets: MovieLens, Flixster, Epinions and Book Crossing. For the MovieLens and Flixster datasets, ratings are given on scales of 1 to 5 and those greater than 3 are considered relevant with the remaining ones set to zero. Epinions ratings are given on the scale 0 to 5 and Book Crossing ones are given from 0 to 10, and we assign relevance to ratings greater than 3 and 4 respectively. For all datasets, we remove users with less than 10 items and items with less than 2 users, repeating this process until convergence is reached. Properties about the resulting matrices are shown in Table \ref{tab:datasets}. 

\begin{table}
\centering
\begin{tabular}{|l |c c c c|} 
\hline
Dataset & users & items & nonzeros & sparsity (\%) \\ 
\hline
Book Crossing & 9571 & 68,517 & 640,430 & 0.098 \\ 
Epinions & 12,663 & 38,499 & 371,969 & 0.076 \\
Flixster & 43,979 & 32,024 & 5,147,187 & 0.37 \\ 
MovieLens & 897 & 1281 & 54,883 & 4.78 \\ 
\hline
\end{tabular}
\caption{Properties of the real datasets}\label{tab:datasets}
\end{table}

The experimental procedure is similar to before except that we select $k \in \{32, 64, 128\}$ in this case and use 2 model selection repetitions. Since the datasets are larger than the synthetic ones we perform model selection on a subsample of at most $10^5$ ratings from the complete matrices. To form the model selection matrix, we pick users sequentially until the desired number of ratings is reached. Any items which then have no ratings are removed.  The parameters for MFAUC are chosen from $\alpha \in \{2^{1}, \ldots, 2^{-2}\}$, $\lambda \in \{2^{0}, \ldots, 2^{-5}\} $ and $\kappa_{\mathcal{W}} = 15$. After the model selection step, we use the parallel SGD procedure in conjunction with MLAUC to compute the final matrix factorisation.

\begin{table*}
\centering
\begin{tabular}{|l|l |l l l | l l l | l |} 
\cline{3-9}
\multicolumn{1}{c}{}&& p@1 & p@3 & p@5 & r@1 & r@3 & r@5 &  AUC\\
\hline
\multirow{9}{*}{\rotatebox[origin=c]{90}{\texttt{Book Crossing}}}
 & SoftImpute		 & \textbf{.040} & \textbf{.031} & \textbf{.027} & \textbf{.008} & \textbf{.019} & \textbf{.027} & .782\\
 & WRMF		 & \textbf{.040} & .030 & .026 & \textbf{.008} & .018 & .026 & .754\\
 & MFAUC hinge          	 & .018 & .016 & .014 & .004 & .010 & .014 & .845\\
 & MFAUC sigmoid        	 & .015 & .013 & .012 & .003 & .008 & .012 & \textbf{.849}\\
 & MFAUC logistic       	 & .020 & .016 & .014 & .004 & .010 & .014 & .836\\
 & MFAUC tanh $\rho=.5$	 & .015 & .012 & .010 & .003 & .007 & .010 & .797\\
 & MFAUC tanh $\rho=1.0$	 & .018 & .015 & .013 & .004 & .009 & .013 & .833\\
 & MFAUC tanh $\rho=2.0$	 & .018 & .016 & .014 & .004 & .009 & .014 & .846\\
\hline
\multirow{9}{*}{\rotatebox[origin=c]{90}{\texttt{Epinions}}}
 & SoftImpute		 & .037 & .030 & .026 & .007 & .018 & .026 & .793\\
 & WRMF		 & \textbf{.041} & \textbf{.031} & \textbf{.027} & \textbf{.008} & \textbf{.019} & \textbf{.027} & .771\\
 & MFAUC hinge          	 & .025 & .020 & .019 & .005 & .012 & .019 & .826\\
 & MFAUC sigmoid        	 & .025 & .020 & .018 & .005 & .012 & .018 & .819\\
 & MFAUC logistic       	 & .034 & .027 & .023 & .007 & .016 & .023 & .853\\
 & MFAUC tanh $\rho=.5$	 & .027 & .024 & .021 & .005 & .014 & .021 & .824\\
 & MFAUC tanh $\rho=1.0$	 & .031 & .026 & .023 & .006 & .016 & .023 & .854\\
 & MFAUC tanh $\rho=2.0$	 & .033 & .028 & .024 & .007 & .017 & .024 & \textbf{.859}\\
\hline
\multirow{9}{*}{\rotatebox[origin=c]{90}{\texttt{Flixster}}}
 & SoftImpute		 & .157 & .111 & .089 & .031 & .067 & .089 & .926\\
 & WRMF		 & .167 & .119 & .096 & .033 & .071 & .096 & .891\\
 & MFAUC hinge          	 & \textbf{.168} & \textbf{.132} & \textbf{.112} & \textbf{.034} & \textbf{.079} & \textbf{.112} & \textbf{.984}\\
 & MFAUC sigmoid        	 & .121 & .090 & .076 & .024 & .054 & .076 & .980\\
 & MFAUC logistic       	 & .167 & .126 & .107 & .033 & .076 & .107 & \textbf{.984}\\
 & MFAUC tanh $\rho=.5$	 & .119 & .090 & .077 & .024 & .054 & .077 & .981\\
 & MFAUC tanh $\rho=1.0$	 & .058 & .049 & .043 & .012 & .029 & .043 & .967\\
 & MFAUC tanh $\rho=2.0$	 & .069 & .052 & .047 & .014 & .031 & .047 & .969\\
\hline
\multirow{9}{*}{\rotatebox[origin=c]{90}{\texttt{MovieLens}}}
 & SoftImpute		 & .209 & .165 & .140 & .042 & .099 & .140 & .880\\
 & WRMF		 & .225 & .174 & .145 & .045 & .104 & .145 & .891\\
 & MFAUC hinge          	 & .217 & .183 & .158 & .043 & .110 & .158 & .919\\
 & MFAUC square         	 & .211 & .176 & .157 & .042 & .106 & .157 & .925\\
 & MFAUC sigmoid        	 & .237 & \textbf{.193} & .165 & .047 & \textbf{.116} & .165 & .924\\
 & MFAUC logistic       	 & \textbf{.241} & .191 & \textbf{.166} & \textbf{.048} & .114 & \textbf{.166} & \textbf{.926}\\
 & MFAUC tanh $\rho=.5$	 & .223 & .185 & .161 & .045 & .111 & .161 & .922\\
 & MFAUC tanh $\rho=1.0$	 & .226 & .183 & .159 & .045 & .110 & .159 & .923\\
 & MFAUC tanh $\rho=2.0$	 & .222 & .180 & .157 & .044 & .108 & .157 & .918\\
\hline
\end{tabular}
\caption{Test errors on the real datasets. }\label{tab:realRes}
\end{table*}

Table \ref{tab:realRes} shows the results on these datasets. It is clear that the non-ranking based methods perform reasonably well on the more sparse datasets Epinions and Book Crossing relative to MLAUC. We noticed in the model selection stage for example, MLAUC would not converge adequately for many parameter sets. Whilst this did not negatively impact the AUC, it did effect the items at the very top of the list, hence the low precision and recall scores for these datasets. In contrast, we see with Flixster and MovieLens that the ranking methods show their advantage particularly with the hinge and logistic losses. With MovieLens for example, all of the ranking losses improve over WRMF in the precisions at 3 and 5. 

A useful comparison is between the hinge and $tanh$ losses since it demonstrates the effectiveness of prioritising list elements. The results are mixed: on the Epinions and MovieLens datasets we gain an improvement with this prioritisation function, but on Flixster results are worse. A difficultly of the approach is that one must correctly set $\rho$ for accurate results. 

\section{Conclusion}

Recommendation is a Learning To Rank (LTR) problem, in that for each user the system has to rank items according to the relevance for the user. Whilst early recommender systems based on matrix factorisation aimed to recover the complete matrix of ratings, recent advances focus on optimising scoring losses designed for LTR problems. The current paper pushes forward this domain by considering local AUC maximisation, which focuses on the ranking of top items. The corresponding loss is handled with a smooth surrogate function which is minimised through stochastic gradient descent. Furthermore, use of parallel architectures by a blockwise partitioning of the rating matrix in conjunction with stochastic gradient descent allows the algorithm to run on datasets with millions of known entries. In addition we show that our chosen loss functions are consistent with AUC and gained insight into the generalisation of the algorithms using Rademacher Theory. 

From the computational study we can conclude that the proposed parallelisation by block optimisation speeds up the convergence whilst keeping the quality of the objective relative to single process optimisation. The weighting of observations, which gives more importance to items which are more often relevant, can be effective for certain datasets. In general, the MFAUC approach is a useful tool when the sparsity of the underlying dataset under examination is not too high.  

\section*{Acknowledgements}

This work is funded by the Eurostars ERASM project.

\bibliographystyle{plain}
\bibliography{references}

\begin{thebibliography}{10}

\bibitem{adomavicius2005toward}
G.~Adomavicius and A.~Tuzhilin.
\newblock {Toward the next generation of recommender systems: A survey of the
  state-of-the-art and possible extensions}.
\newblock {\em Knowledge and Data Engineering, IEEE Transactions on},
  17(6):734--749, 2005.

\bibitem{bartlett03rademacher}
Peter~L. Bartlett and Shahar Mendelson.
\newblock {Rademacher and gaussian complexities: risk bounds and structural
  results}.
\newblock {\em Journal of Machine Learning Research}, 3:463--482, 2003.

\bibitem{basu1998recommendation}
C.~Basu, H.~Hirsh, W.~Cohen, et~al.
\newblock {Recommendation as classification: Using social and content-based
  information in recommendation}.
\newblock In {\em {Proceedings of the national conference on artificial
  intelligence}}, pages 714--720. John Wiley \& Sons LTD, 1998.

\bibitem{candes2009exact}
Emmanuel~J Cand{\`e}s and Benjamin Recht.
\newblock {Exact matrix completion via convex optimization}.
\newblock {\em Foundations of Computational mathematics}, 9(6):717--772, 2009.

\bibitem{clemenccon2007ranking}
St{\'e}phan Cl{\'e}men\c{c}on and Nicolas Vayatis.
\newblock {Ranking the best instances}.
\newblock {\em The Journal of Machine Learning Research}, 8:2671--2699, 2007.

\bibitem{cremonesi2010performance}
Paolo Cremonesi, Yehuda Koren, and Roberto Turrin.
\newblock Performance of recommender algorithms on top-n recommendation tasks.
\newblock In {\em Proceedings of the fourth ACM conference on Recommender
  systems}, pages 39--46. ACM, 2010.

\bibitem{das2007google}
A.S. Das, M.~Datar, A.~Garg, and S.~Rajaram.
\newblock {Google news personalization: scalable online collaborative
  filtering}.
\newblock In {\em {Proceedings of the 16th international conference on World
  Wide Web}}, pages 271--280. ACM, 2007.

\bibitem{dhanjal14online}
Charanpal Dhanjal, Romaric Gaudel, and St{\'{e}}phan Cl{\'{e}}men{\c{c}}on.
\newblock Online matrix completion through nuclear norm regularisation.
\newblock In {\em Proceedings of the 2014 {SIAM} International Conference on
  Data Mining}, pages 623--631, 2014.

\bibitem{dhanjal2015collaborative}
Charanpal Dhanjal, Romaric Gaudel, and St{\'e}phan Cl{\'e}men{\c{c}}on.
\newblock Collaborative filtering with localised ranking.
\newblock In {\em Twenty-Ninth AAAI Conference on Artificial Intelligence},
  2015.

\bibitem{eisenstat1995relative}
Stanley~C Eisenstat and Ilse~CF Ipsen.
\newblock Relative perturbation techniques for singular value problems.
\newblock {\em SIAM Journal on Numerical Analysis}, 32(6):1972--1988, 1995.

\bibitem{gao2013one}
Wei Gao, Rong Jin, Shenghuo Zhu, and Zhi-Hua Zhou.
\newblock One-pass auc optimization.
\newblock In {\em Proceedings of The 30th International Conference on Machine
  Learning}, pages 906--914, 2013.

\bibitem{gao2012consistency}
Wei Gao and Zhi-Hua Zhou.
\newblock On the consistency of auc optimization.
\newblock {\em arXiv preprint arXiv:1208.0645}, 2012.

\bibitem{gemulla2011large}
Rainer Gemulla, Erik Nijkamp, Peter~J Haas, and Yannis Sismanis.
\newblock Large-scale matrix factorization with distributed stochastic gradient
  descent.
\newblock In {\em Proceedings of the 17th ACM SIGKDD international conference
  on Knowledge discovery and data mining}, pages 69--77. ACM, 2011.

\bibitem{herschtal2004optimising}
Alan Herschtal and Bhavani Raskutti.
\newblock Optimising area under the roc curve using gradient descent.
\newblock In {\em Proceedings of the twenty-first international conference on
  Machine learning}, page~49. ACM, 2004.

\bibitem{hu2008collaborative}
Yifan Hu, Yehuda Koren, and Chris Volinsky.
\newblock Collaborative filtering for implicit feedback datasets.
\newblock In {\em Data Mining, 2008. ICDM'08. Eighth IEEE International
  Conference on}, pages 263--272. IEEE, 2008.

\bibitem{koren2009matrix}
Y.~Koren, R.~Bell, and C.~Volinsky.
\newblock {Matrix factorization techniques for recommender systems}.
\newblock {\em Computer}, 42(8):30--37, 2009.

\bibitem{linden2003amazon}
G.~Linden, B.~Smith, and J.~York.
\newblock {Amazon. com recommendations: Item-to-item collaborative filtering}.
\newblock {\em Internet Computing, IEEE}, 7(1):76--80, 2003.

\bibitem{mazumder2010spectral}
Rahul Mazumder, Trevor Hastie, and Robert Tibshirani.
\newblock {Spectral regularization algorithms for learning large incomplete
  matrices}.
\newblock {\em The Journal of Machine Learning Research}, 11:2287--2322, 2010.

\bibitem{McDiarmid1989}
Colin McDiarmid.
\newblock {On the method of bounded differences}.
\newblock In {\em {Surveys in Combinatorics 1989}}, pages 148--188. Cambridge
  University Press, Cambridge, 1989.

\bibitem{meir2003generalization}
Ron Meir and Tong Zhang.
\newblock Generalization error bounds for bayesian mixture algorithms.
\newblock {\em The Journal of Machine Learning Research}, 4:839--860, 2003.

\bibitem{pan2008one}
Rong Pan, Yunhong Zhou, Bin Cao, Nathan~Nan Liu, Rajan Lukose, Martin Scholz,
  and Qiang Yang.
\newblock One-class collaborative filtering.
\newblock In {\em Data Mining, 2008. ICDM'08. Eighth IEEE International
  Conference on}, pages 502--511. IEEE, 2008.

\bibitem{polyak1992acceleration}
Boris~T Polyak and Anatoli~B Juditsky.
\newblock Acceleration of stochastic approximation by averaging.
\newblock {\em SIAM Journal on Control and Optimization}, 30(4):838--855, 1992.

\bibitem{rendle2009bpr}
Steffen Rendle, Christoph Freudenthaler, Zeno Gantner, and Lars Schmidt-Thieme.
\newblock {BPR: Bayesian personalized ranking from implicit feedback}.
\newblock In {\em {Proceedings of the Twenty-Fifth Conference on Uncertainty in
  Artificial Intelligence}}, pages 452--461. AUAI Press, 2009.

\bibitem{shi2012climf}
Yue Shi, Alexandros Karatzoglou, Linas Baltrunas, Martha Larson, Nuria Oliver,
  and Alan Hanjalic.
\newblock Climf: learning to maximize reciprocal rank with collaborative
  less-is-more filtering.
\newblock In {\em Proceedings of the sixth ACM conference on Recommender
  systems}, pages 139--146. ACM, 2012.

\bibitem{steck2010training}
Harald Steck.
\newblock Training and testing of recommender systems on data missing not at
  random.
\newblock In {\em Proceedings of the 16th ACM SIGKDD international conference
  on Knowledge discovery and data mining}, pages 713--722. ACM, 2010.

\bibitem{steck2011item}
Harald Steck.
\newblock Item popularity and recommendation accuracy.
\newblock In {\em Proceedings of the fifth ACM conference on Recommender
  systems}, pages 125--132. ACM, 2011.

\bibitem{szomszor2007folksonomies}
M.~Szomszor, C.~Cattuto, H.~Alani, K.~O{\rq}Hara, A.~Baldassarri, V.~Loreto,
  and V.D.P. Servedio.
\newblock {Folksonomies, the semantic web, and movie recommendation}.
\newblock In {\em {Workshop on Bridging the Gap between Semantic Web and Web
  2.0, at 4th European Semantic Web Conference (ESWC2007)}}, 2007.

\bibitem{weston2013learning}
Jason Weston, Hector Yee, and Ron~J Weiss.
\newblock Learning to rank recommendations with the k-order statistic loss.
\newblock In {\em Proceedings of the 7th ACM conference on Recommender
  systems}, pages 245--248. ACM, 2013.

\bibitem{yan2003optimizing}
Lian Yan, Robert~H Dodier, Michael Mozer, and Richard~H Wolniewicz.
\newblock Optimizing classifier performance via an approximation to the
  wilcoxon-mann-whitney statistic.
\newblock In {\em Proceedings of the 20th International Conference on Machine
  Learning (ICML-03)}, pages 848--855, 2003.

\end{thebibliography}

\end{document}